\newcolumntype{Y}{>{\centering\arraybackslash}X}
\newcommand{\vfx}[1]{\mathbf{X}_{#1}}
\newcommand{\vfy}[1]{\mathbf{Y}_{#1}}
\newcommand{\vfxx}{\mathbf{X}}
\newcommand{\vfyy}{\mathbf{Y}}
\newcommand{\vfww}{\mathbf{W}}
\theoremstyle{plain}
\newtheorem{theorem}{Theorem}[section]
\theoremstyle{definition}
\newcommand{\keywords}[1]{%
  \par\addvspace{0.5em}
  \noindent
  \begin{list}{}%
    {\setlength{\leftmargin}{\leftmargini}
     \setlength{\rightmargin}{\leftmargini}
     \setlength{\listparindent}{0pt}
     \setlength{\itemindent}{0pt}}%
  \item[]\small\textbf{Keywords: }#1
  \end{list}
}
\title{Pattern recognition in complex systems via vector-field representations of spatio-temporal data}
\date{}
\author[1,2*]{Ingrid~Amaranta~Membrillo~Solis}%\footnote{Corresponding author.}}
\author[3]{Maria~van~Rossem}
\author[2]{Tristan~Madeleine}
\author[3]{Nina~Podoliak}
\author[3]{Tetiana~Orlova}
\author[2]{Giampaolo~D'Alessandro\footnote{These authors jointly supervised this work.}}
\author[2]{Jacek~Brodzki\protect\footnotemark[2]}
\author[3]{Malgosia~Kaczmarek\protect\footnotemark[2]}
\affil[1]{School of Mathematical Sciences, Queen Mary University of London,  327 Mile End Road, E1 4NS, UK}
\affil[2]{Mathematical Sciences, University of Southampton,  Southampton SO17~1BJ, UK}
\affil[3]{Physics and Astronomy, University of Southampton,  Southampton SO17~1BJ, UK}
\affil[*]{Corresponding author, i.a.membrillosolis@qmul.ac.uk}
\begin{document}

\maketitle

\begin{abstract}

A complex system comprises multiple interacting entities whose interdependencies form a unified whole, exhibiting emergent behaviours not present in individual components. Examples include the human brain, living cells, soft matter, Earth’s climate, ecosystems, and the economy. These systems exhibit high-dimensional, non-linear dynamics, making their modelling, classification, and prediction particularly challenging. Advances in information technology have enabled data-driven approaches to studying such systems. However, the sheer volume and complexity of spatio-temporal data often hinder traditional methods like dimensionality reduction, phase-space reconstruction, and attractor characterisation. This paper introduces a geometric framework for analysing spatio-temporal data from complex systems, grounded in the theory of vector fields over discrete measure spaces. We propose a two-parameter family of metrics suitable for data analysis and machine learning applications. The framework supports time-dependent images, image gradients, and real- or vector-valued functions defined on graphs and simplicial complexes. We validate our approach using data from numerical simulations of biological and physical systems on flat and curved domains. Our results show that the proposed metrics, combined with multidimensional scaling, effectively address key analytical challenges. They enable dimensionality reduction, mode decomposition, phase-space reconstruction, and attractor characterisation. Our findings offer a robust pathway for understanding complex dynamical systems, especially in contexts where traditional modelling is impractical but abundant experimental data are available.

\end{abstract}

\keywords{Pattern recognition, discrete measure spaces, vector fields, complex systems, multidimensional scaling, dimensionality reduction, chaotic attractor.}

\section{Introduction}
\label{sec:Intro}

Many systems across all scientific disciplines exhibit complexity. A complex system consists of entities showing collective behaviour distinct from that of its constituents, and its dynamics is typically characterised by feedback loops, self-organisation, chaotic attractors or emergence. Examples of complex systems include the human brain \cite{bassett2011}, living cells~\cite{Maayan2017}, soft matter materials \cite{zhou2019}, the Earth's  climate \cite{rind1999}, organisms \cite{camazine2020} and the economy \cite{beinhocker2006}.  Because of their ubiquity and rich behaviour, complex systems are important and challenging to study.  They share some common features. First, they are difficult to model and analyse because many constituent entities and interactions result in intricate, high-dimensional, non-linear dynamics. From a mathematical point of view, they may be represented by sets of differential equations, possibly stochastic, but also by discrete models, e.g. self-driven many-particle systems. The same complex system may be amenable to both descriptions, e.g. crowd dynamics~\cite{Helbing2000, Hughes2003} and active matter in general~\cite{Marchetti2013}.  From a data science point of view, their state can be represented by a range of formats of spatio-temporal data: movies (i.e. two-dimensional representations) and volume data sampled at discrete times, as well as more complex data structures. In biofilms~\cite{Penesyan2021}, for example, relevant data may be the positions of the biofilm components, also their gene, activity and stage in the reproductive cycle. In all these cases, the spatio-temporal data associated with these systems are ultimately $n$-tuples of numbers defined over discrete spatial structures, where each point in the spatial structure has associated a real or a vector value. From a mathematical point of view, this type of data can be described as \textit{vector fields over discrete measure spaces}. We will precisely define these mathematical objects in the next section.  A straightforward example of such data structure is a monochromatic image: such an image consists of a rectangular lattice in which every point in the lattice has been assigned a number, a one dimensional vector, that takes a value between 0 and 255.  In the soft-matter systems that motivated this study, skyrmion dynamics in liquid crystals~\cite{Sohn2019Schools} and cellular networks formation in liquid crystal doped with gold nanoparticles~\cite{Milette2012Reversible}, the data sets consist of movies.  We will report our analysis of these systems elsewhere.  Here, we aim to present a geometric-based method to analyse complex spatio-temporal data with the structure of vector fields over discrete metric measure spaces and apply it to analyse artificial data modelling complex system dynamics obtained by integrating numerically exemplar partial differential equations.

Data analysis from complex systems presents two challenges, and we discuss these in turn. The first challenge is that to achieve an accurate and computationally tractable prediction of dynamical systems, it is necessary to develop low-dimensional models that are easy to handle but provide a good approximation of the underlying dynamics \cite{ghadami2022}. A significant hurdle in studying complex system dynamics through data-driven approaches is the large amounts of complex spatio-temporal data generated, such as images, video recordings or networks,  which increases the difficulty of analysing and interpreting it. Information technology and artificial intelligence developments have brought new tools and approaches to detecting, classifying, understanding, predicting and controlling dynamical systems. Standard data-driven tools used in the analysis of dynamical systems include artificial neural networks \cite{champion2019, wan2018} and classical machine learning methods, such as clustering \cite{fernex2021} and classical principal component analysis (PCA) \cite{maisuradze2009}.

Classical PCA and proper orthogonal decomposition (POD) are some of the most frequently used methods for dimensionality reduction. \cite{Rega2005Dimension}. Classical PCA uses Euclidean metrics to quantify the similarity between spatio-temporal data points, which are seen as points in a high-dimensional vector space. However, classical PCA faces computational challenges when dealing with high-dimensional data sets \cite{fan2014}. Similarly, POD has been applied to detect the lowest dimensional reduced order models using linear coordinate transformation \cite{kerschen2002}. Nonlinear methods based on invariant manifolds have also been proposed \cite{touze2021}. Still, they are extremely challenging in general and even more so when the model is unknown and only data are available. Deep learning methods have recently been developed to produce reduced-order models \cite{lee2020, Li2021Hierarchical}. For an efficient application of deep learning methods in complex system dynamics, large amounts of data are required for generalised solutions, which may hinder the identification and prediction of many real-world dynamical systems. 
The second challenge is the difficulty of detecting the structure of the underlying attractor.  Complex systems in equilibrium may display chaotic dynamics.  Chaos theory has been a very active study area for more than 50 years~\cite{Olsen1985}. It has a wide range of applications, for example, in biology: examples of systems which are conjectured to exhibit chaos include heart beats, neural systems, population dynamics, and evolution~\cite{Toker2020}.  Its key features are the long-term unpredictability of orbits and the evolution of the equilibrium orbit over a "strange" attractor, a globally attractive sets of locally unstable orbits.  They are very different "standard" dynamical system attractors, like fixed points and periodic orbits, which are locally stable and lead to long-term predictability.  Distinguishing the type of attractor presented in a given complex dynamical system can be challenging. Various methods have been introduced to distinguish a system as chaotic or non-chaotic \cite{gottwald2004,djurovic2008,Toker2020}. Some apply only to low-dimensional attractors \cite{kantz2004}. Most of the others rely on binary detection tests that do not further characterise the system's attractor.   In \cite{bhattacharya2021} the authors use standard hidden Markov modeling, a supervised learning method, to detect and classify the type of attractor in a dynamical system. However, supervised methods will fail in cases where little or no knowledge is available about the nature of the complex dynamical system. 
Moreover, it is possible that a complex system never settles on an equilibrium attractor, or that the time series is not long enough for the transient to be over.  In this case, it may not be possible to classify the dynamics in terms of attractors, but it may still be beneficial to characterise it in terms of its transient states \cite{Vanhatalo2017structure}.

In this paper, we present a geometric approach to the analysis of spatio-temporal data generated in the study of complex dynamical systems, which can be efficiently used for dimensionality reduction, phase space reconstruction, mode decomposition and global attractor classification without the need for any previous knowledge on the nature of the dynamics of the system. Our framework is based on the observation that various spatio-temporal data can be regarded as vector fields over discrete measure spaces. We introduce a 2-parameter family of metrics on the set of vector fields over a discrete measure space, which permits distinguishing the physical states of a complex system at different levels of resolution.
Examples of data for which the theory of vector fields over discrete measure spaces can be applied are monochromatic, RGB or any other channel-type images, gradients of images, discrete vector fields over meshes, graphs and simplicial complexes, and their correspondent discrete gradients. We demonstrate that the metric spaces constructed from the data generated by the dynamical systems can be later used as the input data in the multidimensional scaling (MDS) algorithm for dimensionality reduction, dynamic mode decomposition and phase space reconstruction. Our methodology extracts dynamical information from extensive unstructured data such as high-resolution images at low computational cost. It is flexible in terms of the type of data that can be analysed through it (images, gradients, simplicial complexes, etc.). Finally, we encode it in an algorithm that permits us to approximate discretised vector-valued functions over curved (non-flat) domains using the first principal coordinates of the system. As a proof of concept, we apply our geometric framework to the data analysis of numerical solutions of the Ginzburg-Landau equation on flat domains, their gradient, and numerical solutions of reaction-diffusion systems on spherical domains. These are dynamical systems which can exhibit chaotic behaviour. In particular, one of the systems  considered in this work is an example of spatio-temporal chaotic dynamics in the context of morphogenesis. Our results demonstrate that the proposed analytic framework outlined in this paper allows one to reconstruct the low-dimensional phase spaces of the systems and to distinguish between different types of dynamical behaviours and, in particular, detect chaotic behaviour.

The structure of the paper is as follows: Section~\ref{sec:TFoSM} develops the theory of vector fields over discrete measure spaces and introduces a 2-parameter family of metrics to measure distance between these objects. This section is more mathematical and may be skipped by readers mostly interested in the data analysis methodology. Its conclusions are summarised in Figure~\ref{fig:pipeline} and the associated algorithm. The geometric methods developed in Section 2 are applied in Section~\ref{sec:applications} to two examples: solutions to the Ginzburg-Landau equation on a flat domain in section~\ref{sec:CGLE}, and solutions to the Gray-Scott equation on a sphere in Section~\ref{Turing}. The analysis of both these systems is compared to the Lyapunov exponents analysis in Section~\ref{sec:Lya}. Finally, Section~\ref{sec:conclusions} presents a conclusion to this work.

\section{The geometry of spatio-temporal data}

\label{sec:TFoSM}

In this section, we introduce the theory of vector fields over discrete measure spaces, the mathematical objects that will model complex spatio-temporal data such as time-dependent images, gradients of images, weather data on curved domains, neural activity on the brain, etc. We will show that data modelled as vector fields over discrete measure spaces can be endowed with a 2-parameter family of metrics. This is of particular interest in the context of dynamical systems since metrics can be used to distinguish time-dependent states, ultimately leading to an abstract representation of their dynamics. We will demonstrate that the use of MDS can allow us, among other things, to obtain low-dimensional Euclidean representations of complex systems' dynamics. 

We start this section by defining a discrete measure space and presenting some examples. Then, we define vector fields over discrete measure spaces and present examples of data that can be mathematically modelled using these objects. Next, we show that the set of vector spaces over a fixed discrete measure space can be endowed with a 2-parameter family of metrics to give rise to a family of metric spaces, the $\mathcal L^{p,q}$ spaces.  These spaces will allow us to track the time-evolution of a complex system. Finally, we present a pipeline to generate and analyse low-dimensional Euclidean representations of complex systems dynamics. 

\subsection{Discrete measure spaces}
Let $\mathcal M$ be a countable set and let $\mathcal P(\mathcal M)$ be its power set regarded as a $\sigma$-algebra. A measure $\mu$ on $\mathcal M$ is a function $\mu:\mathcal  P(\mathcal M)\to \mathbb R_{\geq0}\cup\{\infty\}$ satisfying the following conditions:
\begin{enumerate}
\item for any $A\in\mathcal P(\mathcal M)$, all $s\in A$ satisfy $\mu(\{s\})<\infty$;
\item for all $A\in \mathcal P(\mathcal M)$,  $\mu(A)=\underset{a\in A}\sum\mu(\{s\}).$
\end{enumerate}
Here $\{s\}$ denotes the singleton set corresponding to the element $s$. A discrete measure space $(\mathcal M,\mu)$ is a countable set $\mathcal M$ along with a measure $\mu$. If $\mathcal M$ is finite, we say that $(\mathcal M,\mu)$ is a finite measure space.

\subsubsection{Examples}

\begin{enumerate}
\item Let $S$ be a finite set. For all $s\in S$, we define $\mu(s)=n$, with $n$ a positive real. Then the pair $(S,\mu)$ is a discrete measure space.

\item Cubical complexes (Figure~\ref{fig:cub}).  
A $d$-cube of dimension $d$, $\omega$, is a product of finitely many elementary intervals with $d$ unitary intervals of the real line as factors. A set $X\subset \mathbb R^d$ is cubical if it is the union of finitely many elementary cubes, $X=\bigcup_{i=1}^r\omega_{i}$ satisfying certain glueing rules (see \cite{kaczynski2004}). A cubical complex of dimension $d$ is a cubical set $X\subset \mathbb R^m$, $m\geq d$, where the maximal dimension of its cubes is $d$. An $m\times n$-cubical lattice $\Lambda_{m\times n}$ of dimension $d$ is a cubical complex such that the centres of the  $d$-cubes $q$ lie over the points $x_0,\dots,x_{mn}$ which define a $m\times n$-lattice on $ \mathbb R^{mn}$.  
Let $S(\Lambda_{mn})$ denote the set of all elementary $d$-cubes $\omega$ of $\Lambda_{mn}$. The set $S(\Lambda_{mn})$  along with the measure $\mu$ given by $\mu(\omega)=1$ defines a discrete measure space.

\item Simplicial complex (Figure~\ref{fig:sim}). A simplex of dimension $n$ or $n$-simplex $\sigma$ is the collection of all points $x=\Sigma_{i=0}^nt_iv_i$, $0\leq i\leq n$, such that $\sum_{i=0}^n t_i=1$, $t_i\geq0$, for some geometrically independent set $V=\{v_0, v_1,\dots,v_n\}\subset \mathbb R^d$. The elements of $V$ are called the vertices of $\sigma$. 
The standard $n$-simplex $\Delta^n$ is defined by the set $V=\{e_0,e_1,\dots,e_n\}$, the canonical basis of $\mathbb R^{n+1}$. 

A simplicial complex $K$ is a finite collection of simplices such that whenever an $n$-simplex $\sigma$ belongs to $K$ so does any face ($k$-simplex, $k<n$) of $\sigma$, and if $\sigma,\tau\in K$ then $\sigma\cap\tau$ is either empty or a face of both. The dimension of $K$ is the maximum dimension of its simplices. Let $S(K)$ be the set of all simplices $\sigma$ of $K$ of maximum dimension. We define a measure $\mu$ on $S(K)$ by $\mu(\sigma)=\text{vol}(\sigma)$. The pair $(S(K),\mu)$ is a discrete measure space.

\item Voronoi diagrams (Figure~\ref{fig:vor}). Starting with a set $S$  of points, called seeds, on a metric space $X$, the Voronoi diagram of $S$, $\mathcal V(s)$ is a partition of $X$ into regions called cells, where each cell contains exactly one seed. These regions, called Voronoi cells, satisfy the condition that all points in a given region are closer to the seed of the region than to any other. Let $C\in \mathcal V(S)$ and $\text{vol}(C)$ be the volume of $C$. The pair $(\mathcal  V(S),\mu)$, where $\mu(C)=\text{vol}(C)$ defines a discrete measure space. 
\end{enumerate}

\begin{figure}
    ~\hfill
    \begin{subfigure}[t]{0.16\textwidth}
        \includegraphics[width=\textwidth]{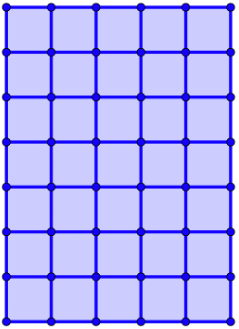}
        \caption{}\label{fig:cub}
    \end{subfigure}
    \hfill
    \begin{subfigure}[t]{0.23\textwidth}
        \includegraphics[width=\textwidth]{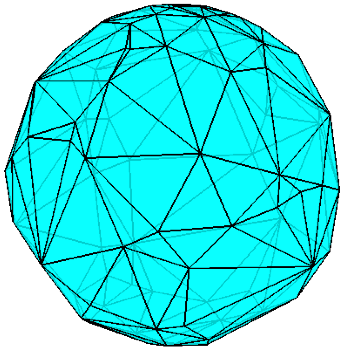}
        \caption{}\label{fig:sim}
    \end{subfigure}
    \hfill
    \begin{subfigure}[t]{0.23\textwidth}
        \includegraphics[width=\textwidth]{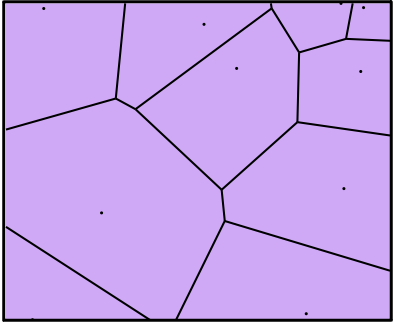}
        \caption{}\label{fig:vor}
    \end{subfigure}
    \hfill~
    \caption{Examples of discrete measure spaces: (a) cubical lattice; (b) simplicial complex; (c) Voronoi diagram on the plane.}
    \label{fig:dms}
\end{figure}

\subsection{Vector field representation of spatio-temporal data}
\label{sec:tensors_discrete}

Vector fields over discrete measure spaces might be seen as objects that generalise discrete vector fields over smooth manifolds. In general, we will not require the discrete measure spaces to be discretisations of smooth manifolds.  

Given a vector space $V$ of rank $d$ over a field $k$, let $e_1,\dots,e_d$ denote the elements of an orthonormal basis of $V$. Let $1\leq p,q\leq\infty$. For any vector $v=\sum_{i=1}^d a_ie_i$ in $V$, the $q$-norm $\|v\|_q$ of $v$ is given by 

\begin{equation}
\|v\|_{q}:=\bigg(\sum_{i=1}^d|a_{i}|^q\bigg)^{\frac{1}{q}},
\end{equation}
if $q<\infty$, and  by
\begin{equation}
\|v\|_\infty:=\text{max}\{|a_1|,|a_2|,\dots,|a_d|\},
\end{equation}
otherwise.
Given $v=\sum_{i=1}^d a_{i}e_{i}$ and $w=\sum_{i=1}^d b_i e_{i}$ in $V$, their inner product is defined as: 
\begin{equation}\label{eq:inner}
\langle v,w \rangle=\sum_{i=1}^d a_{i}b_{i}.
\end{equation}
From equation \eqref{eq:inner}, it follows that for $q=2$, the norm $\|v\|_2$ is induced by the inner product:
\begin{equation*}
\|v\|_2=\langle v,v\rangle^{\frac{1}{2}}.
\end{equation*}

Let $(\mathcal M,\mu)$ be a finite measure space and let $V$ be a vector space of dimension $d$ with a norm $\|-\|_q$. A \textit{discrete vector field of rank} $d$ \textit{over} $\mathcal M$ is a map $\mathbf X:\mathcal M\to V$, that sends a point $s\in\mathcal M$ to a vector  $\vfx{}(s)\in V $. Given a vector field of rank $d$ over $\mathcal M$, $\vfx{}$, we define the $L^{p,q}$-norm  $\|\vfx{}\|_{L^{p,q}}$ of $\vfx{}$ by
 \begin{equation}
\|\vfx{}\|_{L^{p,q}}:=\bigg( \sum_{s\in\mathcal M}  \|\vfx{}(s)\|^p_q \mu(s) \bigg)^{\frac{1}{p}}, \label{Lp_norm_discrete}
\end{equation}
if $p<\infty$, and by 
\begin{equation}
\|\vfx{}\|_{L^{\infty,q}}:=\max\{\|\vfx{}(s)\|_q: \mu(s)>0\},
\end{equation}
if $p=\infty$. 

\subsubsection{Examples}
  We present examples of data that can be regarded as vector fields over discrete measure spaces, such as RGB images or weather data (see Figure \ref{fig:dvf}).

\begin{enumerate}

\item Monochromatic and RGB images provide the first examples of data having the structure of a vector field over a discrete measure space, where the pixels define the discrete measure spaces and the light intensity at each pixel defines the value of the discrete vector field at a given element of the discrete measure space. Thus, in what follows, we define images in terms of the theory presented in the previous section.

A monochromatic image of width $w$ and height $h$ is a vector field of rank one $\mathcal X:S(\Lambda_{wh})\to \mathbb R$ over the discrete measure space $(S(\Lambda_{wh}),\mu)$, where $\mu(q)=1$, for all $q\in S(\Lambda_{wh})$. Similarly, an RGB image of width $w$ and height $h$ is a vector field of rank three $\mathcal X:S(\Lambda_{wh})\to \mathbb R^3$ over the discrete measure space $(S(\Lambda_{wh}),\mu)$. In each pixel, we associate the vector that corresponds to its RGB values.

\item Gradients of images. Let $\mathcal X:S(\Lambda_{w\times h})\to \mathbb R$ be a monochromatic image of width $w$ and height $h$. The image gradient $\nabla\mathcal X$ of $\mathcal X$ is
a vector field $\nabla\mathcal X:S(\Lambda_{w\times h}) \to \mathbb R^2$ whose value at the 3-dimensional pixel $q_{i,j}$ in the cubical lattice $\Lambda_{wh}$ is given by 
\begin{equation}
\nabla\mathcal X(q_{i,j})= \big(\mathcal X(q_{i+1,i})-\mathcal X(q_{i,i}),\mathcal X(q_{i,i+1})-\mathcal X(q_{i,i})\big).
\end{equation}
Similarly, given an RGB image $\mathcal X:S(\Lambda_{w\times h})\to \mathbb R^3$ of width $w$ and height $h$, the gradient of $\mathcal X$ is the vector field of rank six $J\mathcal X:S(\Lambda_{w\times h}) \to \mathbb M_{3\times2}\cong \mathbb R^6$,  defined by the matrix

\begin{equation}
J\mathcal X(q_{i,j}):=
\begin{pmatrix}
R(q_{i+1,i})- R(q_{i,i})& R(q_{i,i+1})-R(q_{i,i})\\
G(q_{i+1,i})-G(q_{i,i})&G(q_{i,i+1})-G(q_{i,i})\\
B(q_{i+1,i})-B(q_{i,i})&B(q_{i,i+1})-B(q_{i,i})
\end{pmatrix},
\end{equation}
where $R(q_{i,j})$,  $G(q_{i,j})$,  $B(q_{i,j})$, are the values of the red, green and blue channels, respectively, on the 3-dimensional pixel $q_{i,j}$. 

\item Given a $d$-dimensional simplicial complex $K$ with a discrete measure $\mu$, a vector field over $K$ is a function  $\mathcal X(K):S(K)\to V$, where $S(K)$ is the set of simplices of dimension $d$ of $K$ and $V$ is a vector space of dimension $d$. Let $\mu$ be the measure over the set $S(K)$ that to each element $\sigma\in S(K)$ assigns its volume vol$(\sigma)$. Data from numerical weather and climate models can be represented as a vector field over a simplicial complex. 
These models split the areas of interest on the surface of the Earth into a set of grids, giving rise to simplicial complexes, and then use observations of $d$ variables such as pressure, winds, temperature and humidity, etc.,  at different locations of the planet \cite{collins2013}.  In this case, the observations at each simplex in the Earth's simplicial model can be represented as a $d$-dimensional vector. 

\end{enumerate}

\begin{figure}
    \begin{subfigure}[t]{0.14\textwidth}
        \includegraphics[width=\textwidth]{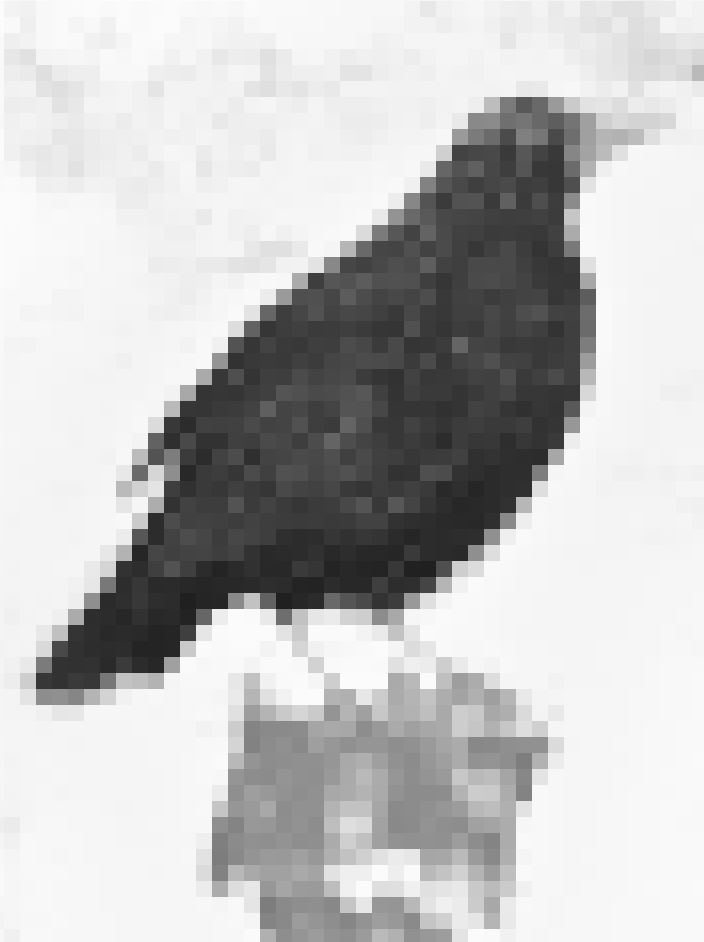}
        \caption{}
    \end{subfigure}
 \hfill
 \begin{subfigure}[t]{0.14\textwidth}
        \includegraphics[width=\textwidth]{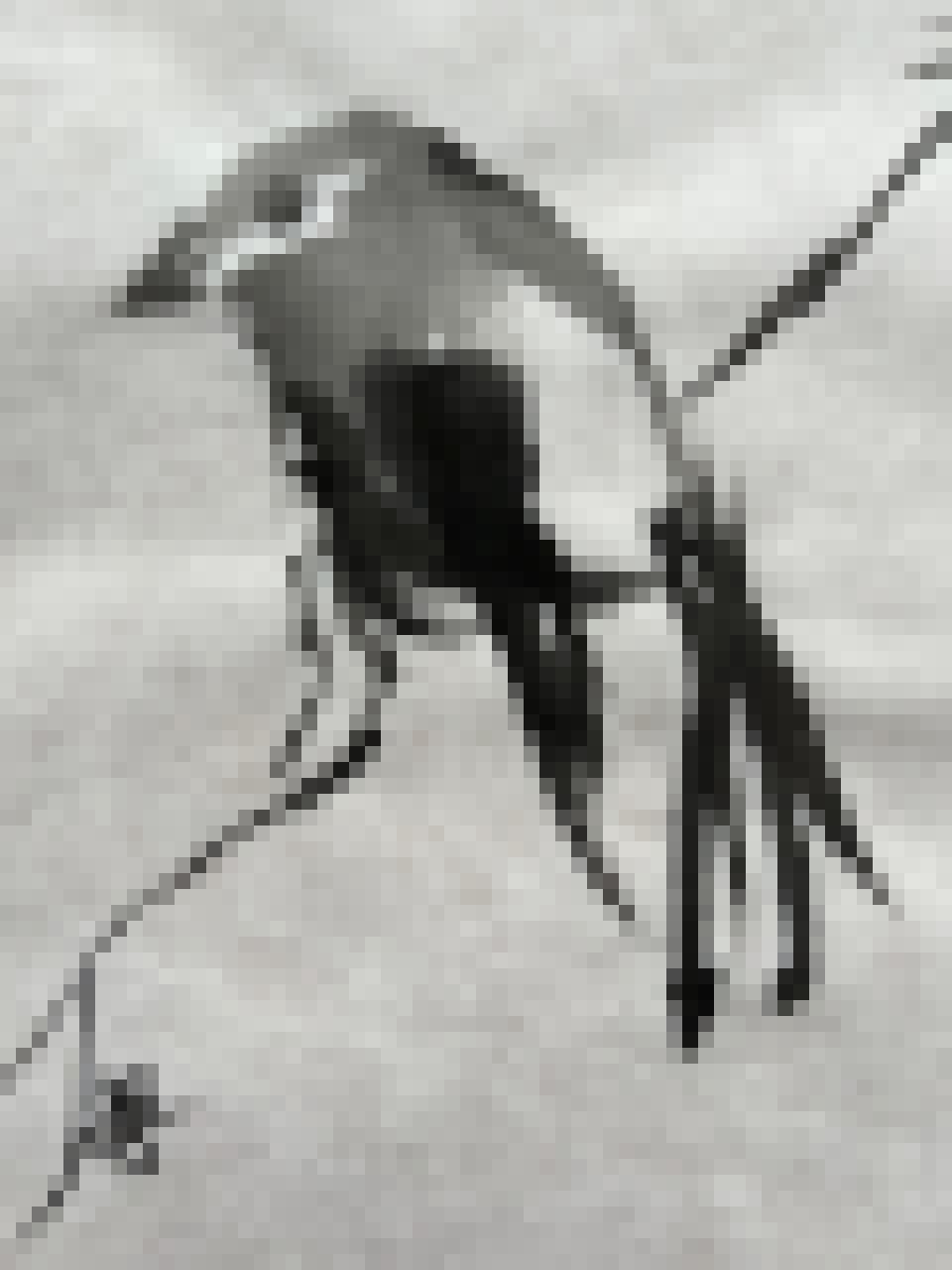}
        \caption{}
    \end{subfigure}
 \hfill
 \begin{subfigure}[t]{0.15\textwidth}
        \includegraphics[width=\textwidth]{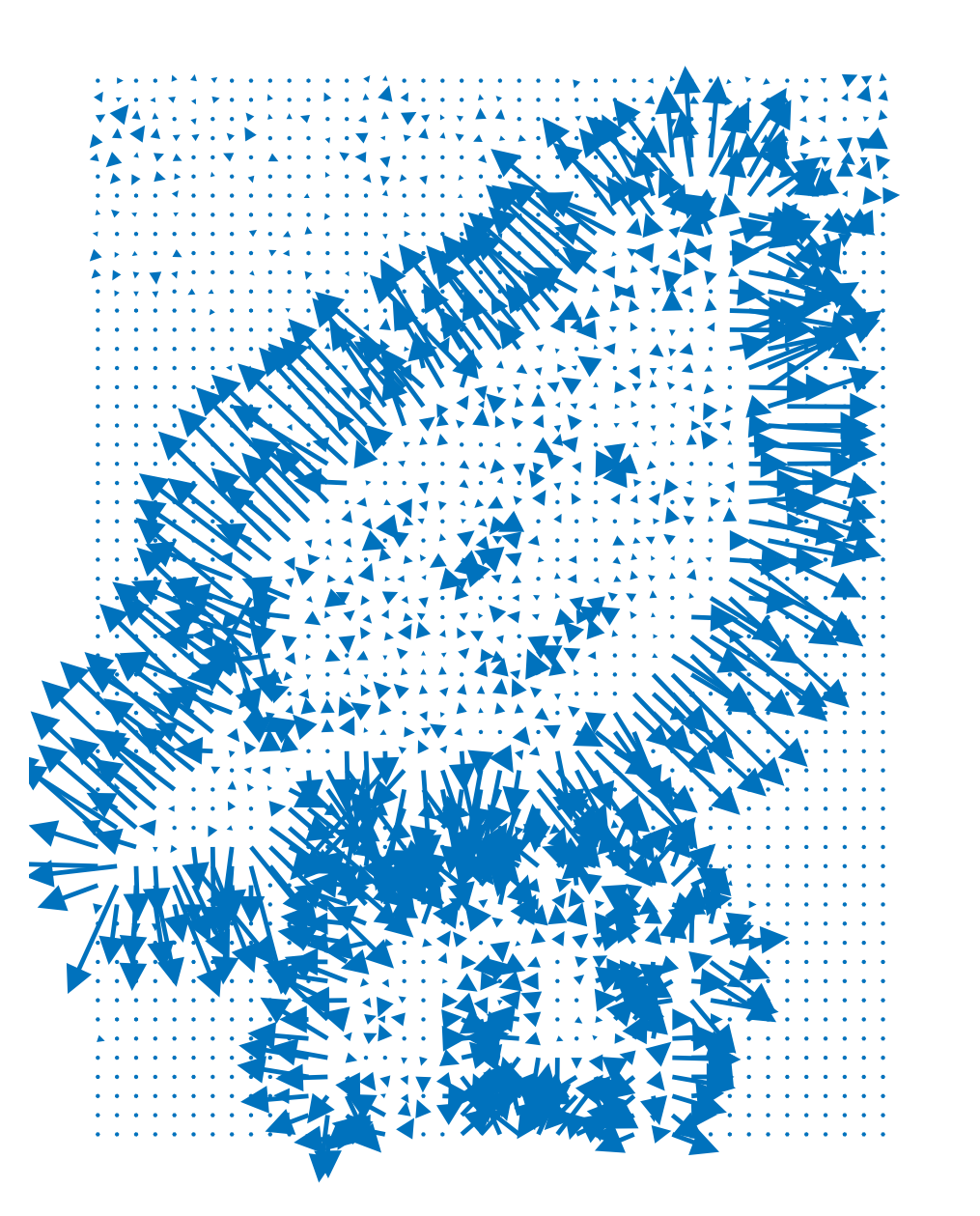}
        \caption{}
    \end{subfigure}
 \hfill
    \begin{subfigure}[t]{0.15\textwidth}
        \includegraphics[width=\textwidth]{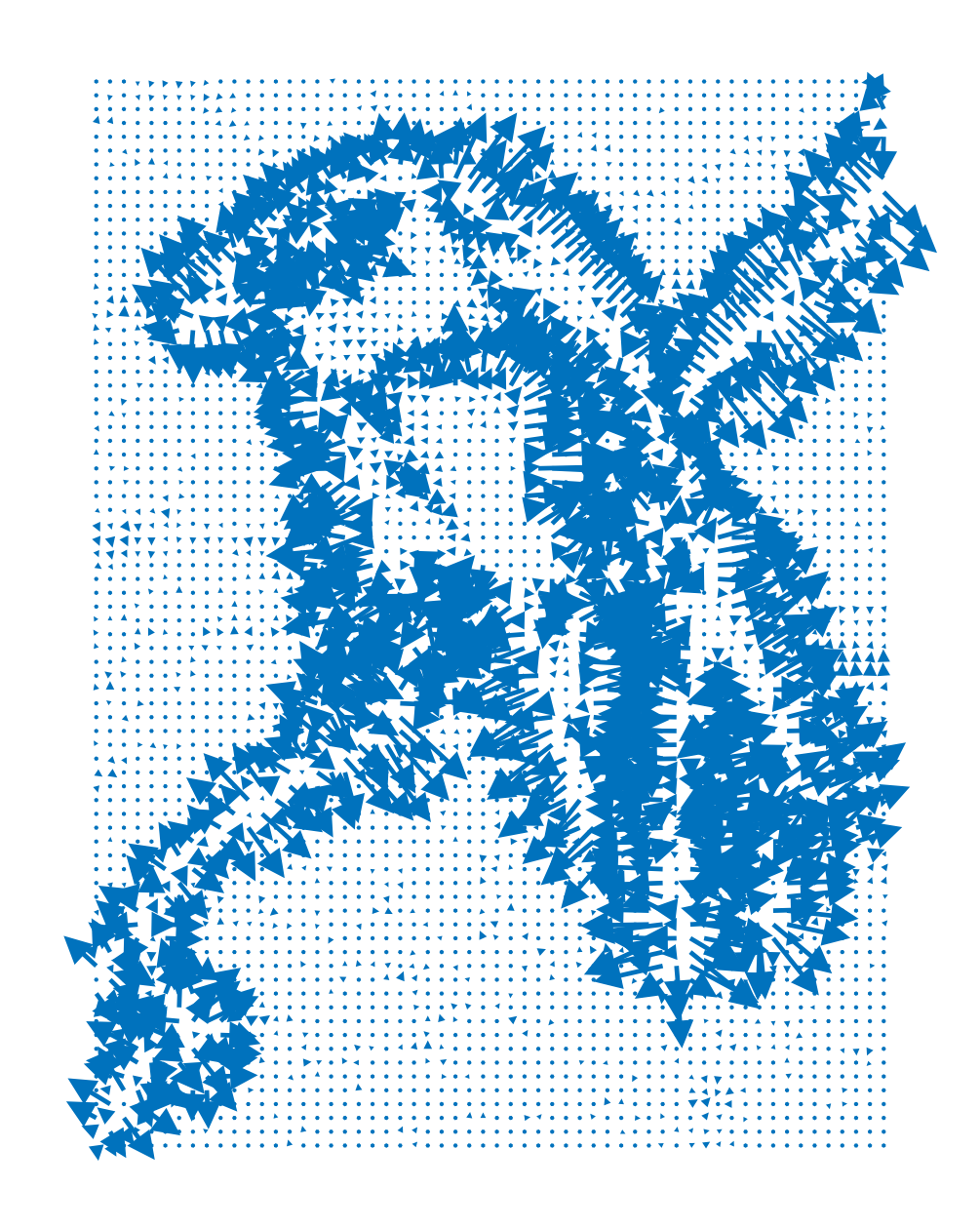}
        \caption{}
    \end{subfigure}
 \hfill
    \begin{subfigure}[t]{0.3\textwidth}
        \includegraphics[width=\textwidth]{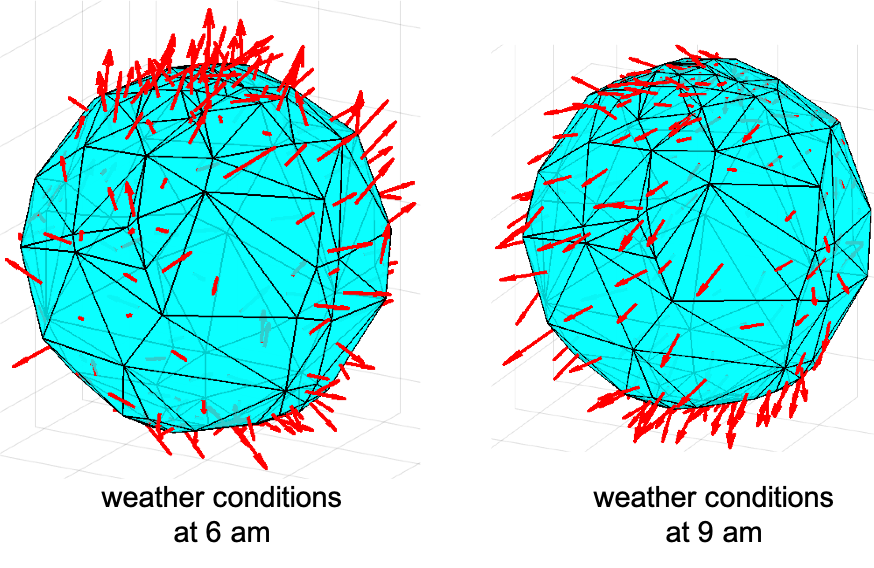}
        \caption{}
    \end{subfigure}
    \caption{Examples of vector fields over discrete measure spaces: (a,b) image, (c,d) image gradient and (e) weather data. The images in panels a to d are data sets on squared lattices, in which each pixel is an element of the lattice, while (e) are data sets on simplicial complexes.  Plots (c,d) are obtained by computing the gradient of the intensity of images (a) and (b), respectively and highlight regions of high contrast.}
    \label{fig:dvf}
\end{figure}

\subsection{$\mathcal L^{p,q}$ spaces} 

We now endow the set of all vector fields of rank $d$ over a fixed discrete measure space with a 2-parameter family of metrics. Metrics allow us, in some sense, to measure how similar two vector fields of rank $d$ over a discrete measure space are. Consequently, we can use these metrics to measure similarities between images, gradients of images, functions on simplicial complexes, or any other type of data that can be represented as a vector field over a discrete measure space. In the context of complex systems, we can use these metrics to distinguish different temporal states of a given complex system. All the pairwise distance values between system states will contain information about the system dynamics.

\label{sec:LpMetMeasure}
For $p,q\in [1,\infty]$ and $(\mathcal M,\mu)$ a discrete measure space, we defined the $L^{p,q}$-metric or distance function on the set of discrete vector fields of rank $d$ over $\mathcal M$ as follows: given two vector fields of rank $d$ over $(\mathcal M,\mu)$, $\vfxx$ and $\vfyy$, their distance, denoted $d_{L^{p,q}}$, is defined as
\begin{equation}
d_{L^{p,q}}(\vfxx,\vfyy) :=\| \mathbf{X}-\mathbf{Y} \|_{ L^{p,q}}\label{Lp_dist_discrete}
\end{equation}
For $p=q=2$, we can express the $L^{2,2}$-norm and the $L^{2,2}$-metric in terms of the inner product in $V$:

\[\|\vfx{}\|_{ L^{2,2}}=\bigg( \sum_{a\in\mathcal M}  \langle \vfx{}(s),\vfxx(s)\rangle \mu(s) \bigg)^{\frac{1}{2}},\]

\[d_{ L^{2,2}}(\vfxx,\vfyy) := \bigg( \sum_{a\in\mathcal M}  \langle \vfww(s),\vfww(s) \rangle\mu(s)  \bigg)^{\frac{1}{2}},\]
where $\vfww(s)=\vfx{}(s)-\vfy{}(s)$.
\subsubsection{Examples}
\begin{enumerate}
    \item  We can use the $L^{p,q}$- metrics to define distances between either monochromatic or RGB images. Given two RGB images $\mathcal X:S(\Lambda_{wh})\to \mathbb R^3$ and $\mathcal Y:S(\Lambda_{wh})\to \mathbb R^3$, their $ L^{p,q}$-distance is given by
\begin{equation}\label{eq:distim}
 {\mathcal L^{p,q}}(\mathcal{X},\mathcal{Y}) = \bigg( \sum_{s\in S(\Lambda_{wh})}  \|\mathcal {X}(s)-\mathcal{Y}(s)\|_{p}^{q} \mu(s) \bigg)^{\frac{1}{p}}.
 \end{equation}
where $\mathcal X(s)=(R,G,B)$ and $\mathcal Y(s)=(R',G',B ')$ are the RGB values at the pixel $s$ and $\mu(s)=1$. Observe that  for the monochromatic case, the distance is also given by expression \eqref{eq:distim} where $\mathcal X{}(s)$ and $\mathcal Y{}(s)$ are real numbers. 

\item Given two $RGB$ images $\mathcal X$ and $\mathcal Y$, the distance between their image gradients $J\mathcal X$ and $J\mathcal Y$ is given as

\begin{equation}\label{eq:distimgrad}
 d_{L^{p,q}}(J\mathcal X,
 J\mathcal Y) = \bigg( \sum_{s\in S(\Lambda_{wh})}  \|J\mathcal X{}(s)-J\mathcal Y{}(s)\|_q^p\mu(s)  \bigg)^{\frac{1}{p}}.
 \end{equation}

\item Let be $K$ be a simplicial complex and $\mu=\text{vol}(\sigma)$. Given two vector fields of rank $d$, $\mathcal X(K)$ and $\mathcal Y(K)$, over $(K,\mu)$, their $L^p$-distance is defined as

\begin{equation}
 d_{L^{p,q}}(\mathcal X(K),\mathcal Y(K)) = \bigg( \sum_{\sigma\in S(K)}  \|\mathcal X{}(\sigma)-\mathcal Y{}(\sigma)\|_q^p{\rm{vol}}(\sigma)  \bigg)^{\frac{1}{p}}.
 \end{equation}

\end{enumerate}
The following result shows that the spaces of vector fields of rank $d$ endowed with the $L^{p,q}$ metric are both vector and metric spaces. Further, this result shows that the $L^{2,2}$ metric is indeed induced by an inner product. 

\begin{theorem} 
Let $(\mathcal M,\mu)$ be a discrete measure space, $V$ be  a vector space of rank $d$ and $1\leq p,q\in \mathbb R$
\begin{enumerate}
\item The collection of all discrete  vector fields of rank $d$ over $\mathcal M$ along with the $L^{p,q}$-metric define a metric space that we denote $\mathcal L^{p,q}(\mathcal M,\mu,V)$.
\item The space $\mathcal L^{p,q}(\mathcal M,\mu,V)$ is a vector space.
\item If p=q=2, then $\mathcal L^{p,q}(\mathcal M,\mu,V)$ is a vector space with an inner product.
\end{enumerate}
\end{theorem}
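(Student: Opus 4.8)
The plan is to establish the three parts in order, since each relies on the previous one. For part (2), I would first observe that a discrete vector field of rank $d$ over $\mathcal M$ is just a function $\mathcal M\to V$, and the set of all such functions carries an obvious pointwise vector-space structure: $(\mathbf X+\mathbf Y)(s):=\mathbf X(s)+\mathbf Y(s)$ and $(\lambda\mathbf X)(s):=\lambda\,\mathbf X(s)$. Since $\mathcal M$ is finite and $V$ is finite-dimensional, there is no integrability obstruction to worry about — every such function automatically has finite $L^{p,q}$-norm — so the underlying set is literally $V^{\mathcal M}$, which is a vector space of dimension $d\cdot|\mathcal M|$. The axioms are inherited coordinatewise from $V$; this is routine and I would state it as such.

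For part (1), the content is that $d_{L^{p,q}}$ is a genuine metric. Symmetry is immediate from $\|\mathbf X-\mathbf Y\|_{L^{p,q}}=\|\mathbf Y-\mathbf X\|_{L^{p,q}}$, and nondegeneracy follows because $\|\mathbf W\|_{L^{p,q}}=0$ forces $\|\mathbf W(s)\|_q\,\mu(s)^{1/p}=0$ for every $s$; since $\mu(\{s\})>0$ for all $s$ (a discrete measure space assigns positive, finite mass to singletons, or at worst one restricts to the support) and $\|\cdot\|_q$ is a norm on $V$, this gives $\mathbf W\equiv 0$. The only real work is the triangle inequality, $\|\mathbf X-\mathbf Z\|_{L^{p,q}}\le\|\mathbf X-\mathbf Y\|_{L^{p,q}}+\|\mathbf Y-\mathbf Z\|_{L^{p,q}}$, i.e. that $\|\cdot\|_{L^{p,q}}$ is itself a norm on the space from part (2). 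I would prove this by viewing $\|\mathbf X\|_{L^{p,q}}$ as a composition of two $\ell^p$-type constructions: inside each fibre we apply the $q$-norm on $V$ (Minkowski in $\mathbb R^d$ with exponent $q$), obtaining a nonnegative function $s\mapsto\|\mathbf X(s)\|_q$ on $\mathcal M$; then $\|\mathbf X\|_{L^{p,q}}$ is the $L^p$-norm of that function with respect to $\mu$. So the triangle inequality reduces to two applications of Minkowski: first $\|\mathbf X(s)+\mathbf Y(s)\|_q\le\|\mathbf X(s)\|_q+\|\mathbf Y(s)\|_q$ pointwise, and then monotonicity of the weighted $\ell^p(\mu)$-norm together with Minkowski's inequality for $\ell^p(\mu)$ applied to the two functions $s\mapsto\|\mathbf X(s)\|_q$ and $s\mapsto\|\mathbf Y(s)\|_q$. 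The $p=\infty$ or $q=\infty$ cases are handled by the usual max-norm versions of the same inequalities. Absolute homogeneity, $\|\lambda\mathbf X\|_{L^{p,q}}=|\lambda|\,\|\mathbf X\|_{L^{p,q}}$, is immediate by pulling $|\lambda|$ through both norms.

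For part (3), with $p=q=2$ I would exhibit the explicit bilinear form
\begin{equation*}
\langle\mathbf X,\mathbf Y\rangle_{L^{2,2}}:=\sum_{s\in\mathcal M}\langle\mathbf X(s),\mathbf Y(s)\rangle\,\mu(s),
\end{equation*}
where $\langle\cdot,\cdot\rangle$ is the inner product on $V$ from equation \eqref{eq:inner}. Bilinearity and symmetry follow from those properties of the fibrewise inner product and the finiteness of the sum; positive definiteness follows because $\langle\mathbf X,\mathbf X\rangle_{L^{2,2}}=\sum_s\|\mathbf X(s)\|_2^2\,\mu(s)$ is a sum of nonnegative terms with positive weights, vanishing iff $\mathbf X\equiv 0$. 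Finally one checks $\langle\mathbf X,\mathbf X\rangle_{L^{2,2}}^{1/2}=\|\mathbf X\|_{L^{2,2}}$, which is exactly the displayed formula for the $L^{2,2}$-norm given just before the theorem, so the $L^{2,2}$-metric is the one induced by this inner product.

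The main obstacle is the triangle inequality in part (1), and specifically getting the nesting of the two Minkowski inequalities right when one or both of $p,q$ equals $\infty$; everything else is bookkeeping inherited coordinatewise from $V$ and from standard properties of weighted $\ell^p$ spaces over the finite (hence trivially $\sigma$-finite) measure space $(\mathcal M,\mu)$. It is worth noting that finiteness of $\mathcal M$ makes completeness automatic, though the statement as given does not ask for it.
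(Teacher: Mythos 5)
Your proposal is correct, and in fact the paper states this theorem without giving any proof at all, so your argument fills a genuine gap rather than duplicating one. The route you take is the standard one for mixed-norm (Bochner-type) spaces $\ell^p(\mu;\ell^q_d)$: pointwise operations give the vector-space structure, the triangle inequality follows from Minkowski applied twice (first fibrewise in $V$ with exponent $q$, then for the weighted $\ell^p(\mu)$-norm of the resulting nonnegative function, using monotonicity of that norm), and the $p=q=2$ case is handled by exhibiting the weighted sum of fibrewise inner products and checking it induces the $L^{2,2}$-norm. Two small points are worth making explicit. First, non-degeneracy of the metric and positive-definiteness of the inner product genuinely require $\mu(\{s\})>0$ for all $s$, which the paper's definition of a discrete measure space does not impose (it only requires $\mu(\{s\})<\infty$); without that hypothesis one only gets a pseudometric and must pass to the support or to equivalence classes, exactly as you note, so this caveat should be stated rather than parenthesised. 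Second, the theorem as written allows $\mathcal M$ to be countably infinite even though vector fields are defined over finite measure spaces; in the infinite case one must restrict to fields of finite $L^{p,q}$-norm, and closure of that subspace under addition is itself a consequence of the same Minkowski argument, so your proof covers it with no extra work. The $p=\infty$ or $q=\infty$ cases you worry about are excluded by the hypothesis $1\leq p,q\in\mathbb R$, so they need not be treated.
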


\subsection{Low-dimensional Euclidean representations of $\mathcal L^{p,q}$ spaces} 
\label{sec:DoToDMS}

 In the previous subsection, we endowed the space of vector fields of rank $d$ over a fixed discrete measure space with a family of metrics. These metrics allow us to measure similarity between data points for spatio-temporal data with the structure of a vector field over a discrete measure space. In particular, for time-evolving complex systems, a metric measures how different the states of a system are at different points in time. One can obtain a matrix of distances from pairwise distances of the system states. In general,  distance matrices of non-Euclidean metrics are not easy to interpret, so a Euclidean representation or embedding of the dynamics of the system is often needed. To this end, we can make use of MDS \cite{cox2008}. This unsupervised learning algorithm permits obtaining embeddings of metric spaces into Euclidean spaces. If the initial matrix of pairwise distances is Euclidean, the embedding will be isometric. Otherwise, the algorithm will produce an embedding with minimal distortion in the distances. Here, we briefly describe how the embedding of a matrix of $ L^{p,q}$ distances is obtained using MDS.

 Let $S=\{F_i\}_{i=1}^n$ be a set containing $n$ data points modelled as vector fields of rank $d$ over a discrete measure space $\mathcal M$ with measure $\mu$. Let $D^{(2)}$ be the matrix of squared distances of the elements of $S$ obtained using the $L^{p,q}$ metric, $1\leq p,q\in\mathbb R$. The matrix of embedding coordinates $A$ associated with the matrix $D^{(2)}$ is obtained from the eigenvalue decomposition of the matrix given by $B=-\frac{1}{2}CD^{(2)}C$. Here $C=I_n-\frac{1}{n}J_n$, where $I_n$ is the $n\times n$ identity matrix and  $J_n$ is the $n\times n$ matrix of all ones. Let $Y$ be the matrix of eigenvectors and $\Lambda$ be the diagonal matrix of the square roots of the eigenvalues of $B$. Then the matrix of embedding coordinates of  $S$ is given by $A=Y\Lambda$.  
 
The Euclidean representation of data obtained from studying complex systems may be high-dimensional. However, the number of degrees of freedom with significant dynamics may be much smaller. It is, therefore, important to identify these principal degrees of freedom, as they contain significant information about the dynamics. The eigenvalues of the matrix $B$ will allow us to identify the principal degrees of freedom of the system. The number of principal degrees of freedom or coordinates of the system is equal to the number of eigenvalues of the matrix $B$ that capture most of the variability of the data, e.g. 80\% or more. We will use these embedding coordinates to visualise the dynamics of the systems in low dimensions.

Next, we will use the matrix of embedding coordinates to approximate each state of the complex system as a linear combination of the first $k$ principal coordinates. 

\begin{theorem}[Low-rank approximation via principal coordinales]\label{teo:low-rank-approximation}
Let $F_1, \ldots, F_n \in \mathcal{L}^{p,q}(\mathcal{M}, \mu, V)$ and let
\[
\overline{F} = \frac{1}{n} \sum_{i=1}^n F_i
\]
be their centroid. Let $A \in \mathbb{R}^{n \times d}$ be the matrix of embedding coordinates such that the $i$-th row $A_i = a(F_i) \in \mathbb{R}^d$ is the embedding of $F_i$, and suppose the linear map
\[
\theta: \mathbb{R}^d \to \mathcal{L}^{p,q}(\mathcal{M}, \mu, V)
\]
satisfies $\theta(a(F_i)) = F_i - \overline{F}$, and is represented by a matrix $H \in \mathbb{R}^{d \times m}$ with respect to fixed standard bases. Let $F \in \mathbb{R}^{n \times m}$ be the matrix whose rows are the coordinates of $F_i - \overline{F}$. Then:

\begin{enumerate}
    \item The matrix $F$ satisfies
    \[
    F = A H^T.
    \]

    \item If $A^T A = \Lambda^2$ for a diagonal matrix $\Lambda$ of positive entries, then
    \[
    H^T = \Lambda^{-2} A^T F.
    \]

    \item For any $k \leq d$, the approximation of $F_i$ using only the first $k$ principal coordinates is given by
    \[
    F_i^{(k)} = \overline{F} + \sum_{j=1}^k \sum_{s=1}^n A_{ij} \Lambda_{jj}^{-2} A_{sj} (F_s - \overline{F}).
    \]
\end{enumerate}
\end{theorem}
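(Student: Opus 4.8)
The plan is to treat all three parts as essentially linear-algebra bookkeeping that unpacks the definitions of the embedding map $a$, the reconstruction map $\theta$, and the matrix $H$ representing $\theta$ in fixed bases. First I would fix the standard basis $\{e_1,\dots,e_d\}$ of $\mathbb{R}^d$ and a basis of $\mathcal{L}^{p,q}(\mathcal{M},\mu,V)$ of size $m$ (finite because $\mathcal M$ is finite and $V$ has rank $d$, so $m = d\cdot|\mathcal M|$), and identify each $F_i - \overline F$ with its coordinate row vector in $\mathbb{R}^m$. With these identifications, $\theta$ acts on a row vector $x\in\mathbb{R}^d$ as $x \mapsto x H$, where $H\in\mathbb{R}^{d\times m}$; note the theorem writes $F = A H^T$, which forces the convention that $\theta(x) = x H$ read as rows, so I would state the convention explicitly at the outset to avoid a transpose mismatch. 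Part (1) is then immediate: the hypothesis $\theta(a(F_i)) = F_i - \overline F$ says that the $i$-th row of $F$ equals $A_i H$ (or $A_i H^{T}$ under the paper's stated convention), i.e. $F = AH^T$ row by row.

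For part (2), I would left-multiply the identity $F = A H^T$ from part (1) by $A^T$ to get $A^T F = (A^T A) H^T = \Lambda^2 H^T$, using the hypothesis $A^T A = \Lambda^2$. Since $\Lambda$ is diagonal with strictly positive entries it is invertible, so $\Lambda^2$ is invertible and $H^T = \Lambda^{-2} A^T F$. The only subtlety worth a sentence is why $A^T A$ is diagonal with positive entries in the MDS setting: this is exactly the normalization $A = Y\Lambda$ with $Y$ the matrix of (orthonormal) eigenvectors of $B$, so $A^T A = \Lambda Y^T Y \Lambda = \Lambda^2$; I would cite the MDS construction described in Section~\ref{sec:DoToDMS} rather than re-derive it, and flag that we restrict to the positive-eigenvalue coordinates so that $\Lambda$ genuinely has positive diagonal.

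Part (3) is then a substitution. Using parts (1) and (2), $F = A H^T = A \Lambda^{-2} A^T F$, so the reconstruction of the $i$-th row is $F_i = \sum_{s=1}^n (A\Lambda^{-2}A^T)_{is}(F_s - \overline F) + \overline F$; expanding the matrix product $(A\Lambda^{-2}A^T)_{is} = \sum_{j=1}^d A_{ij}\Lambda_{jj}^{-2}A_{sj}$ (diagonality of $\Lambda^{-2}$ kills the cross terms) recovers the full-rank ($k=d$) formula. Truncating the outer sum to $j=1,\dots,k$ is then the \emph{definition} of the rank-$k$ approximation $F_i^{(k)}$, so there is nothing further to prove — I would simply observe that this truncation corresponds to projecting the embedded points onto their first $k$ principal coordinates before applying $\theta$, which is the standard MDS low-rank approximation and inherits its optimality. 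I expect no genuine obstacle here; the one place to be careful is consistency of the row/column conventions between $A\in\mathbb{R}^{n\times d}$, $H\in\mathbb{R}^{d\times m}$ and the transposes in the statement, so I would pin that down once at the start and let it propagate cleanly through all three parts.
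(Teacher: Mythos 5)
Your proposal is correct and follows essentially the same route as the paper's own proof: part (1) is read off from the definition of $\theta$ and $H$ in fixed bases, part (2) uses $A^TA=\Lambda^2$ (coming from $A=Y\Lambda$ with $Y$ orthogonal) to solve $F=AH^T$ for $H^T$, and part (3) is the truncation of $F_i=\overline{F}+\sum_j A_{ij}h_j$ to the first $k$ coordinates followed by a coordinate expansion. Your explicit flagging of the row/column and transpose conventions (the stated dimensions $H\in\mathbb{R}^{d\times m}$ versus the formula $F=AH^T$) and of the restriction to positive eigenvalues are worthwhile clarifications, but they do not change the argument.
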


\begin{proof}
Let $F_1,\ldots,F_n$ be the original data points represented as vector fields of rank $d$, and let $A$ be the matrix of their embedding coordinates. Then the $i$-th row of the matrix $A$ is the embedding coordinates $a(F_i)$ of $F_i$ in $\mathbb{R}^d$.  Let $\overline{F}$ be the centroid of the points $F_i$, i.e.\ $\overline{F} = \frac{1}{n} \sum_{i=1}^n F_i$.  Suppose that the map sending $a(F_i)$ to $F_i-\overline{F}$ extends to a linear map $\theta\colon \mathbb{R}^d \to \mathcal L^{p,q}(\mathcal M,\mu,V)$, and let $H$ be the matrix representing this map with respect to the standard bases for $\mathbb{R}^d$ and $\mathcal L^{p,q}(\mathcal M,\mu,V)$. Here, in order to define a standard basis on $\mathcal L^{p,q}(\mathcal M,\mu,V)$, we pick an arbitrary total order on $S$, which we fix from now.  If $F$ is the matrix whose $i$-th row consists of the coordinates of $F_i-\overline{F}$ with respect to the standard basis for $\mathcal L^{p,q}(\mathcal M,\mu,V)$, it then follows that $F = A H^T$.  Note that $A^T A = \Lambda^T Y^T Y \Lambda = \Lambda^T \Lambda = \Lambda^2$ since $Y$ is orthogonal, therefore $H^T = \Lambda^{-2} (A^T A H^T) = \Lambda^{-2} A^T F$.  In particular, the $j$-th basis vector of $\mathbb{R}^d$ corresponds to the $j$-th row of $H^T = \Lambda^{-2} A^T F$.

Now in order to approximate $F_i$ using the first $k \leq d$ principal coordinates, $F_i^{(k)}$, we use the approximation $F_i^{(k)} = \overline{F} + \sum_{j=1}^k A_{ij} h_j$, where $h_j$ is the $j$-th row of $H^T$, so that the $t$-th coordinate of $F_i^{(k)}$ is
\[
(F_i^{(k)})_t = \overline{F}_t + \sum_{j=1}^k A_{ij} (\Lambda^{-2} A^T F)_{jt} = \overline{F}_t + \sum_{j=1}^k \sum_{s=1}^n A_{ij} \Lambda_{jj}^{-2} A_{sj} F_{st},
\]
where $\overline{F}_t$ is the $t$-th coordinate of $\overline{F}$.  In particular, we have
\[
F_i^{(k)} = \overline{F} + \sum_{j=1}^k \sum_{s=1}^n A_{ij} \Lambda_{jj}^{-2} A_{sj} (F_s-\overline{F})\].
\end{proof}

Theorem \ref{teo:low-rank-approximation} generalises the idea behind PCA-based reconstruction to the setting of vector-valued functions over a discrete measure space. It highlights how the structure of the original $\mathcal L^{p.q}$ space is preserved and approximated via the embedding and how one can recover approximations from the principal coordinates. Thus, it is important to point out that the low-dimensional approximation described in this section is analogous to those obtained via classical PCA for the cases when the vector field is a function defined over a rectangular domain, such as monochromatic images or scalar functions over rectangular meshes. Moreover, in cases when the rectangular domain is too large, classical PCA becomes computationally expensive as the dimension of the data space becomes larger. For more general vector fields, such as vector-valued functions over simplicial complexes, classical PCA cannot be used as a method for dimensionality reduction. 

\subsection{Pattern recognition in low-dimensional representations of  $\mathcal L^{p,q}$ spaces}
Figure \ref{fig:pipeline} shows a pipeline for the application of the theory of spaces of vector fields over discrete measure spaces to the geometric analysis of data collected from the study of dynamical systems. In this example pipeline, the vector fields of rank three over discrete domains are RGB images generated as colour maps of a real periodic function over a rectangular grid. Choosing $1\leq p,q\in \mathbb R$, we can compute all the $L^{p,q}$ distances between images to produce a distance matrix. We use the MDS algorithm to produce an Euclidean embedding. We plot the eigenvalues to determine the number of principal coordinates or degrees of freedom of the dynamics. We plot then the principal coordinates of the system, that is, the projection of the embedding onto its principal coordinates. We use Theorem \ref{teo:low-rank-approximation} to generate approximations of the RGB images using only the $k$ coordinates that capture the highest variance of the data, that is, the coordinates with the largest eigenvalues.

\begin{figure}[t]
    \centering
    \includegraphics[width=1\textwidth]{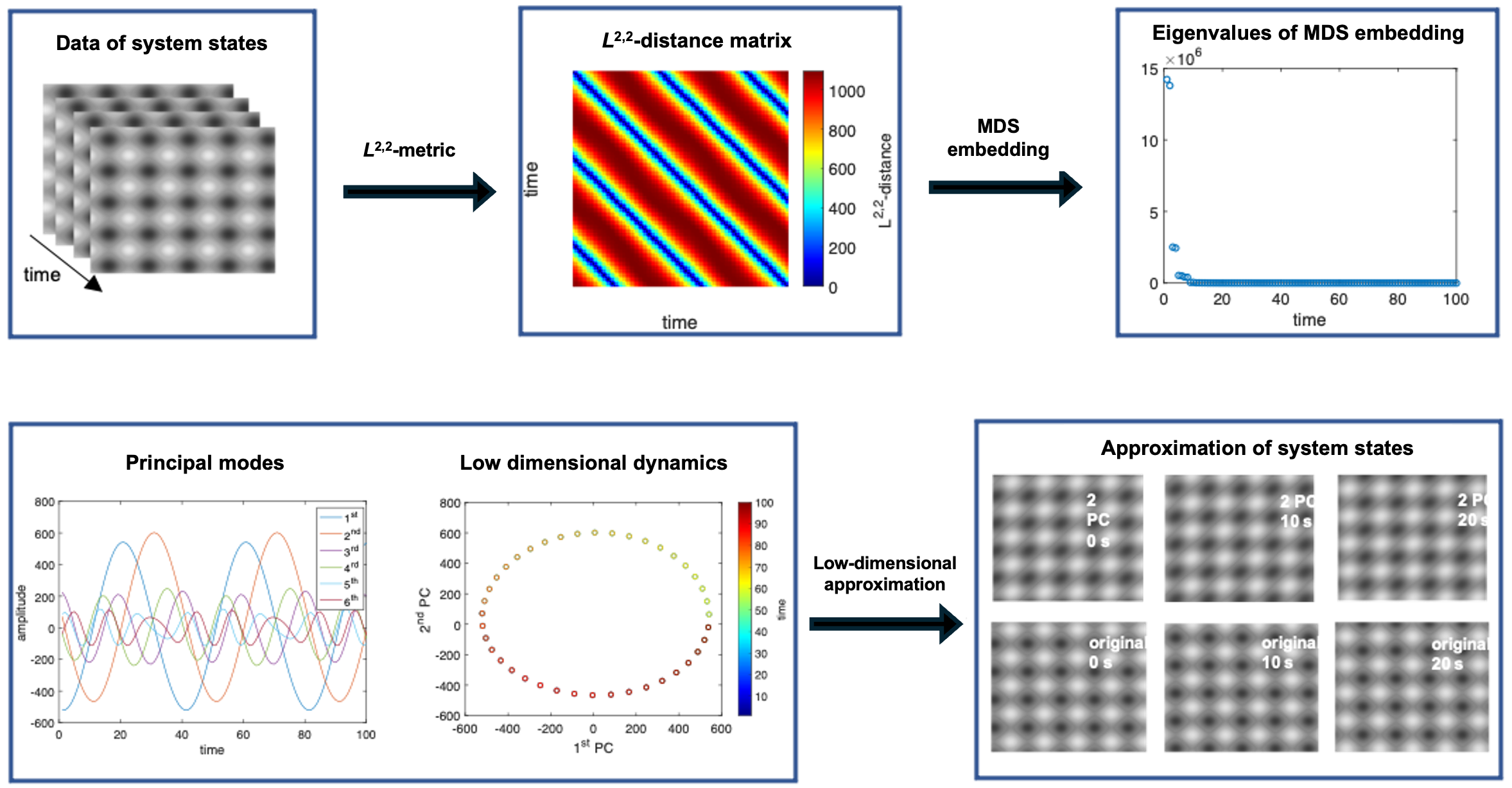}
    \fbox{\parbox[t]{0.8\textwidth}{\small
    \textbf{Algorithm}
    \begin{enumerate}
        \item  Store time-dependent data as vector fields of rank $d$ on a discrete measure space (e.g. RGB images) [left corner upper row].
        \item Choose an $L^{p,q}$ metric; compute distances between data points to generate a matrix that contains all pairwise distances between time-dependent data points [centre upper row].
        \item Use the MDS algorithm to obtain a matrix of embedding coordinates and associated spectrum.
        \item Analyse the spectrum to detect the $k$ principal coordinates; plot their amplitudes either as functions of time or as a low-dimensional phase space [left panel bottom row].
        \item Use the principal coordinates to reconstruct the data [right panel, lower row].
    \end{enumerate}
    }}
    \caption{Pipeline for the application of the spaces of  vector fields of rank $d$ to complex systems dynamics.} 
    \label{fig:pipeline}
\end{figure}

\section{Applications to complex systems dynamics} \label{sec:applications}
In this section, we use the theory of vector fields over discrete measure spaces, described in the previous section, to study synthetic dynamical systems from a data-driven approach. In particular, we use this method in the detection of global attractors and chaotic behaviour. As examples of systems that exhibit chaotic behaviour, we apply the theory described in Section~2 to solutions of the Ginzburg-Landau equation~\cite{Chate1996} and the Gray-Scott model~\cite{Pearson1993}. These systems were chosen because they are well studied and have interesting applications, and because their solutions have a broad range of behaviour, incorporating various levels of order or chaos. For instance, the Ginzburg-Landau equation is widely used as a model system to study spatio-temporal chaos~\cite{Chate1996}, which is a very common state in spatially extended dynamical systems. It is applied to many natural phenomena, such as Rayleigh-B\'{e}nard convection, superconductivity, and predator-prey relationships~\cite{Mocenni2010,Sherratt2009}. The Gray-Scott model was chosen as an example because in certain parameter regimes the solutions do not equilibrate to a homogeneous state but form Turing patterns, 
and in some turbulent regimes they can exhibit oscillating behaviour. Moreover, these types of systems have been used as morphogenesis models for brains and other biological tissues~\cite{lefevre2010,kelso1995,kondo2010,turing1990}. The Gray-Scott model is a convenient choice for a Turing pattern generator because its behaviour is well-studied~\cite{Pearson1993} and it has chaotic states with interesting dynamics, as discussed in Section~\ref{Turing}.

\subsection{Ginzburg-Landau equation on a flat domain} \label{sec:CGLE}

As a first proof of concept, we study image data of different dynamical states generated by the two-dimensional complex Ginzburg-Landau equation (CGLE). One is a stable \lq frozen' state; the two other states, which are chaotic, are termed \lq defect turbulence' and \lq defect turbulence with spirals'~\cite{Chate1996}. The variety of states makes the CGLE suitable for analysis of spatiotemporal chaos. This application of the method outlined in Section~\ref{sec:TFoSM} serves as an example of detecting spatio-temporal chaos in images, or other data on a flat domain.

\subsubsection{Data generation}
\label{GLDG}
Image data was generated by integrating the CGLE in a flat domain with periodic boundary conditions. The CGLE is given by: 
\begin{equation}
    \frac{\partial A}{\partial t} = A + (1+i\alpha)\boldsymbol{\nabla}^2A - (\beta-i)|A|^2 A, \label{CGLE}
\end{equation}

where $A$ is a complex scalar field and $\alpha$ and $\beta$ are real parameters. Integrating the CGLE yields different dynamical behaviours for different parameter choices; the chosen parameter values used in this paper are $\alpha=2$, $\beta=5$ for the frozen state, $\alpha=2$, $\beta=1$ for defect turbulence, and $\alpha=0$, $\beta=0.56$ for defect turbulence with spirals.

Equation~\eqref{CGLE} was solved in Python using the split-step method on a square grid of $128 \times 128$ pixels with periodic boundary conditions. For the nonlinear steps, a third-order explicit Runge-Kutta method was used. The results were verified through visual comparison to the results by Chat\'{e} and Manneville~\cite{Chate1996}. 
Furthermore, numerical convergence of the results was verified for spatial and temporal resolution. The numerical solutions can be regarded as vector fields of rank one over a regular rectangular lattice. In this case, a measure over the lattice is defined by assigning a measure of value one to each pixel.
A vector field of rank two over a regular rectangular lattice was created from the scalar field $A$ by taking the image gradient. That is, we associated the corresponding gradient vector with each pixel of the image. We used the gradient function in Python to compute these vectors.  We applied the theory outlined in Section~\ref{sec:TFoSM} to the gradient vector field in addition to the scalar field for two reasons: on the one hand, the gradient field is of interest whenever one needs to analyse the variation of the corresponding scalar field. 
On the other hand, a common form of data is images in RGB format, which are often converted to greyscale for analysis. However, they can be represented as fields of vectors with 3 components instead without the loss of information that comes with greyscale conversion. 
Gradient vector fields serve as an example that the analysis presented in this paper applies to RBG images and other data with a vector field type. 

\subsubsection{Time dynamics analysis and phase space reconstruction}
Using the $L^{1,1}$ and $L^{2,2}$ metrics defined on the set of vector fields of rank one over a $128\times128$ mesh, the distance matrices corresponding to the solutions of the Ginzburg-Landau equation were obtained for the three types of different states. The $L^{1,1}$-distance matrices have larger values than the corresponding matrices of the $L^{2,2}$-distance,  which can also be appreciated in their low-dimensional MDS embeddings (Figure 4). A similar behaviour is observed in the case of the distance matrices and embeddings for the image gradients of the solutions of the Ginzburg-Landau equation (Figure 5). 

\begin{figure}[t]
    \centering
    \begin{subfigure}[t]{0.45\textwidth}
        \includegraphics[width=\textwidth]{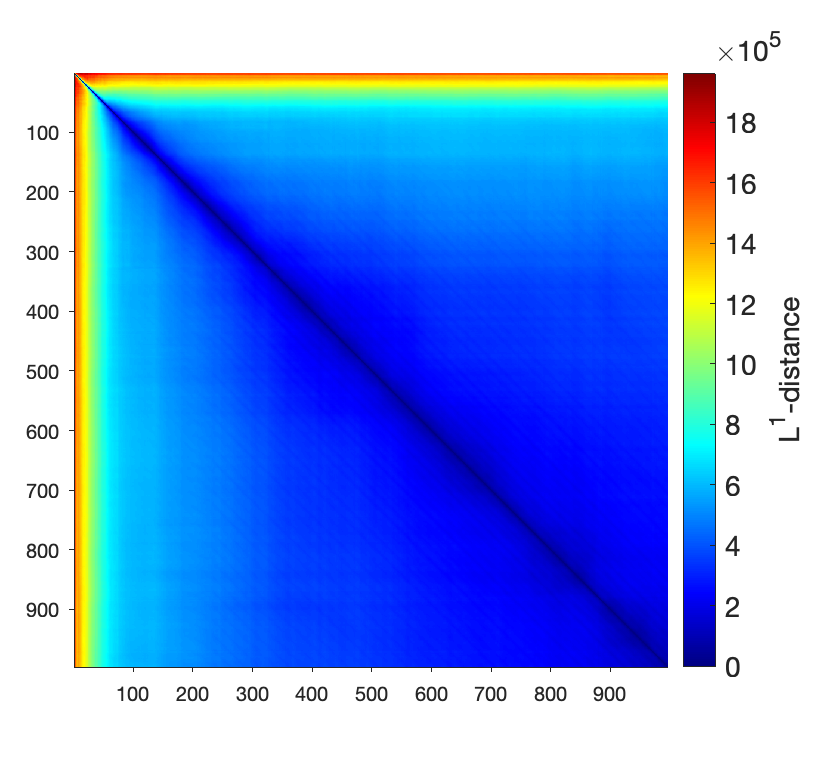}
        \caption{}
    \end{subfigure}
    \begin{subfigure}[t]{0.45\textwidth}
        \includegraphics[width=\textwidth]{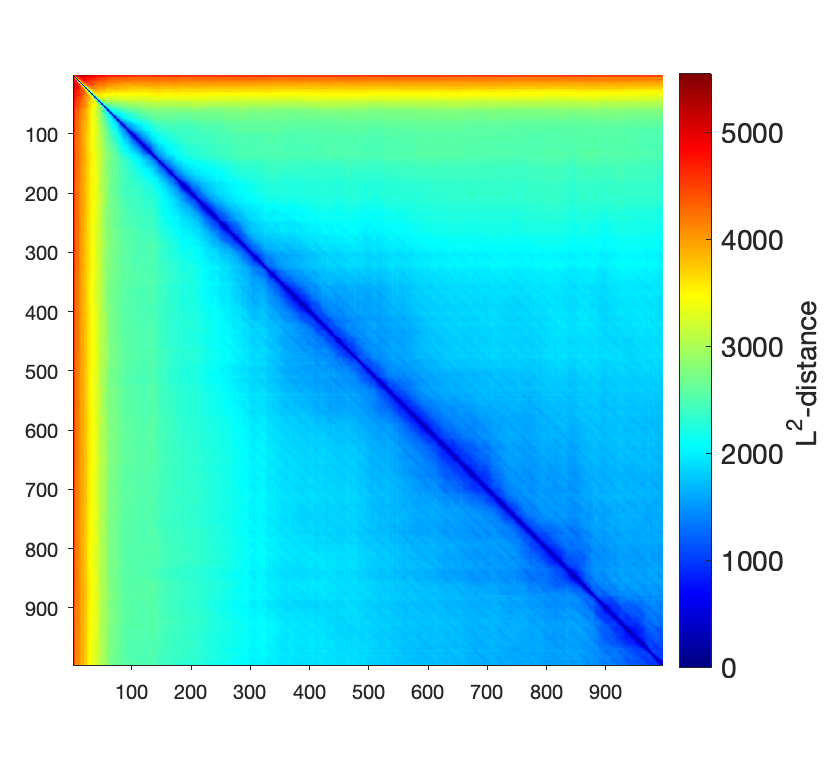}
        \caption{}
    \end{subfigure}
    \\
    \begin{subfigure}[b]{0.45\textwidth}
        \includegraphics[width=\textwidth]{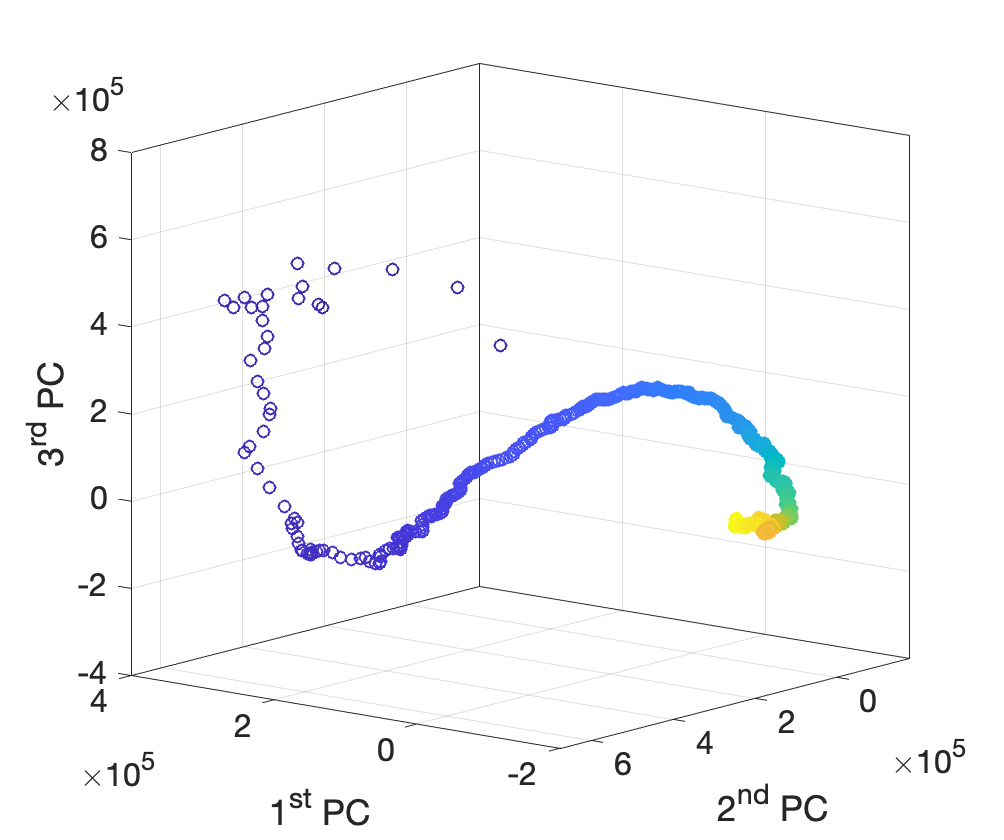}
        \caption{}
    \end{subfigure}
    \begin{subfigure}[b]{0.45\textwidth}
        \includegraphics[width=\textwidth]{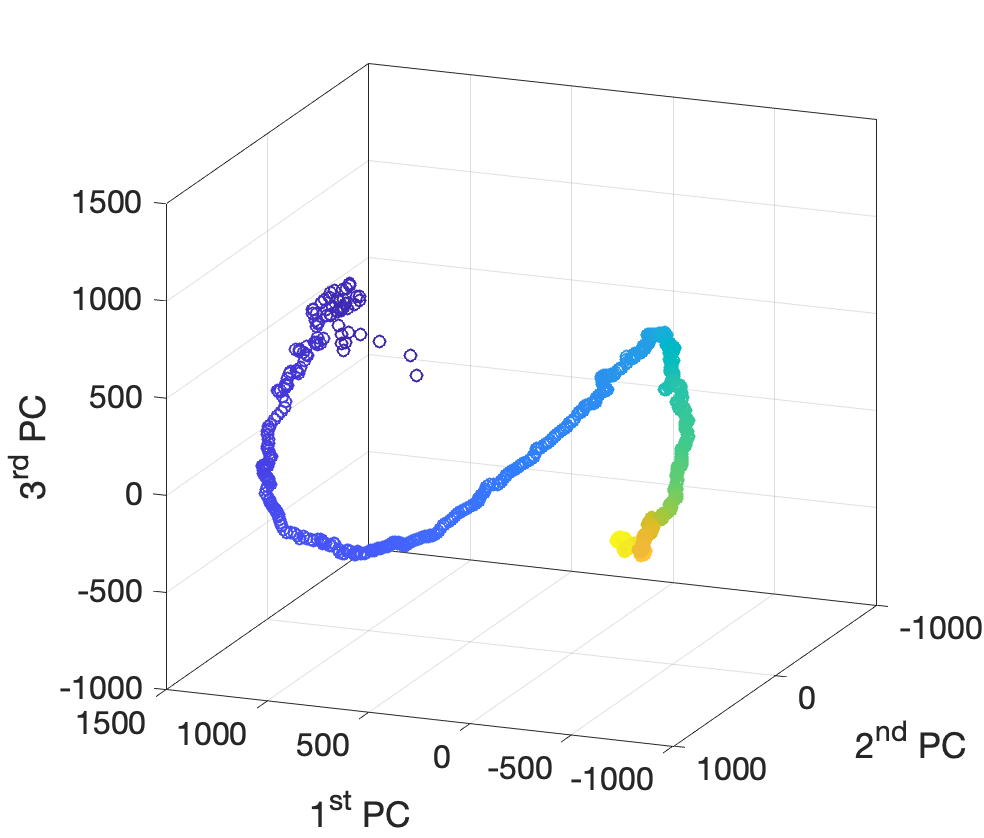}
        \caption{}
    \end{subfigure}
    \caption{Time-evolution of the frozen states of the Ginzburg-Landau equation.(a,b)  Distance matrices using the $L^p$-metric for $p=1,2$. The corresponding 3-dimensional embedding for the cases (c) $p=1$ and (d) $p=2$.    }
    \label{fig:Lp-frozen}
\end{figure}

\begin{figure}[t]
    \centering
    \begin{subfigure}[t]{0.45\textwidth}
        \includegraphics[width=\textwidth]{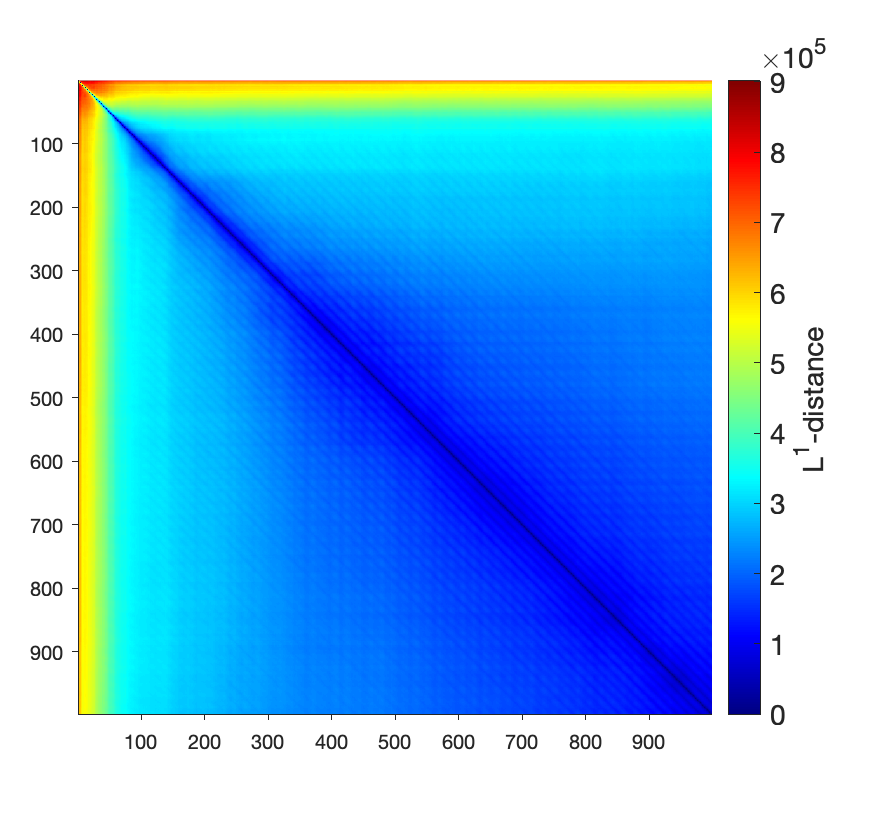}
        \caption{}
    \end{subfigure}
    \begin{subfigure}[t]{0.45\textwidth}
        \includegraphics[width=\textwidth]{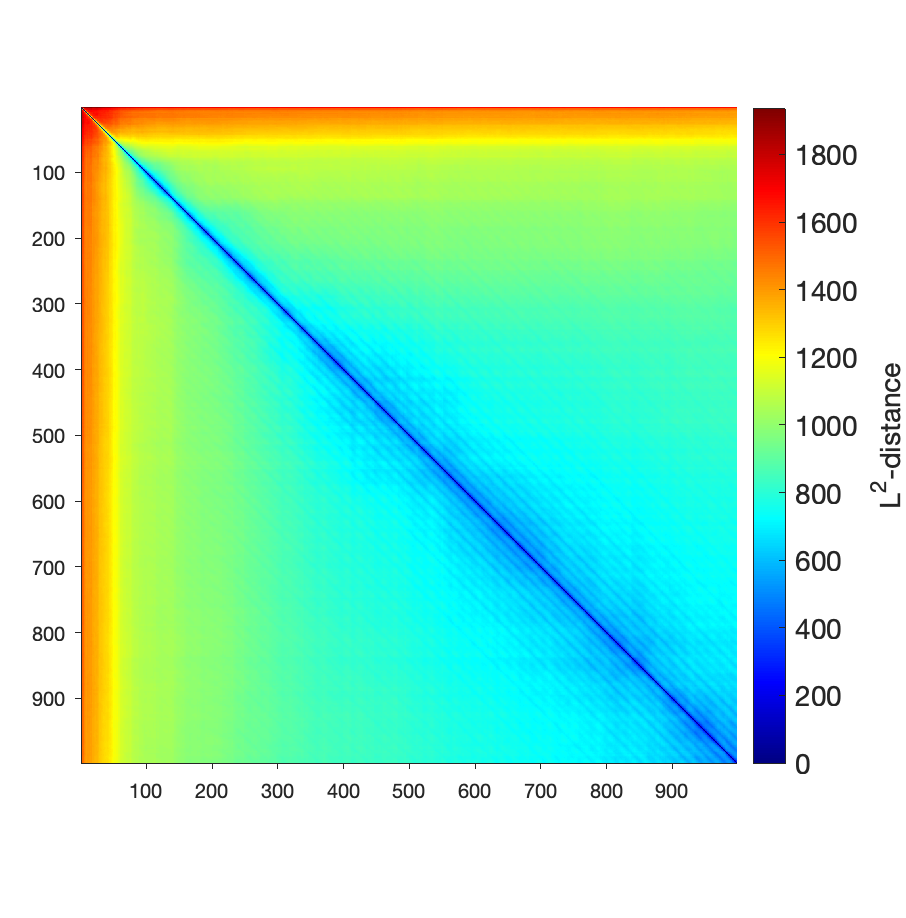}
        \caption{}
    \end{subfigure}
    \\
    \begin{subfigure}[b]{0.45\textwidth}
        \includegraphics[width=\textwidth]{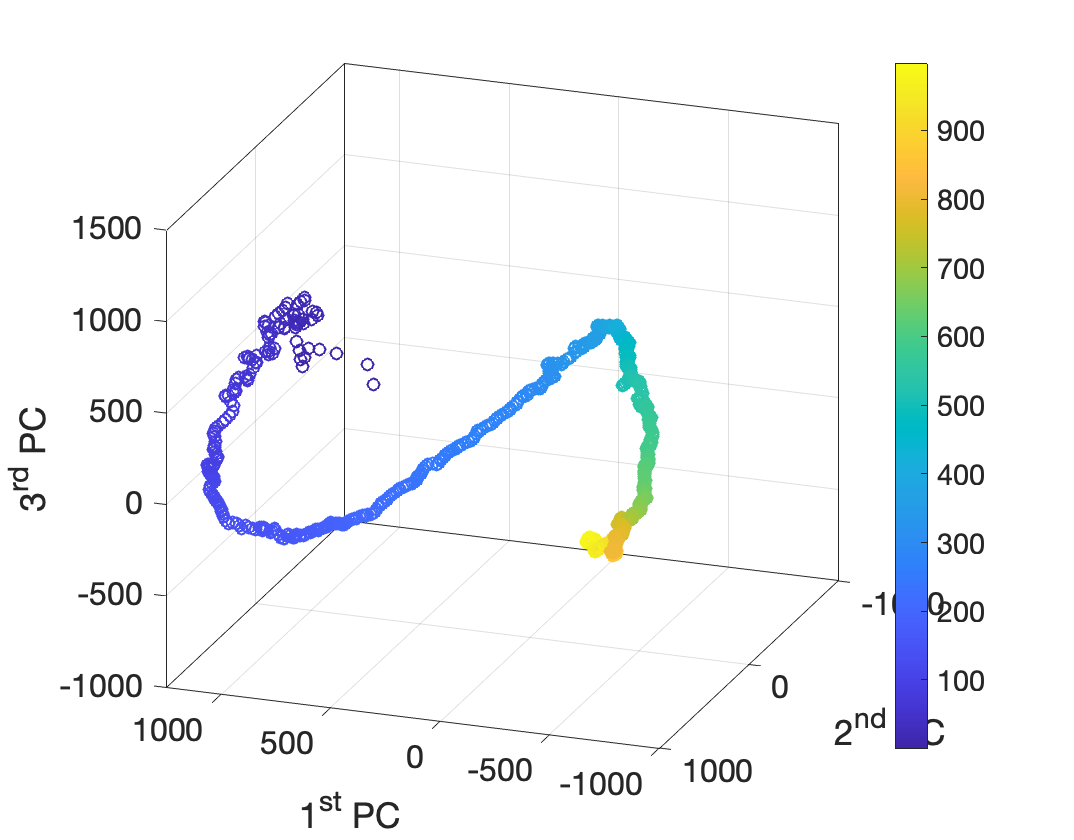}
        \caption{}
    \end{subfigure}
    \begin{subfigure}[b]{0.45\textwidth}
        \includegraphics[width=\textwidth]{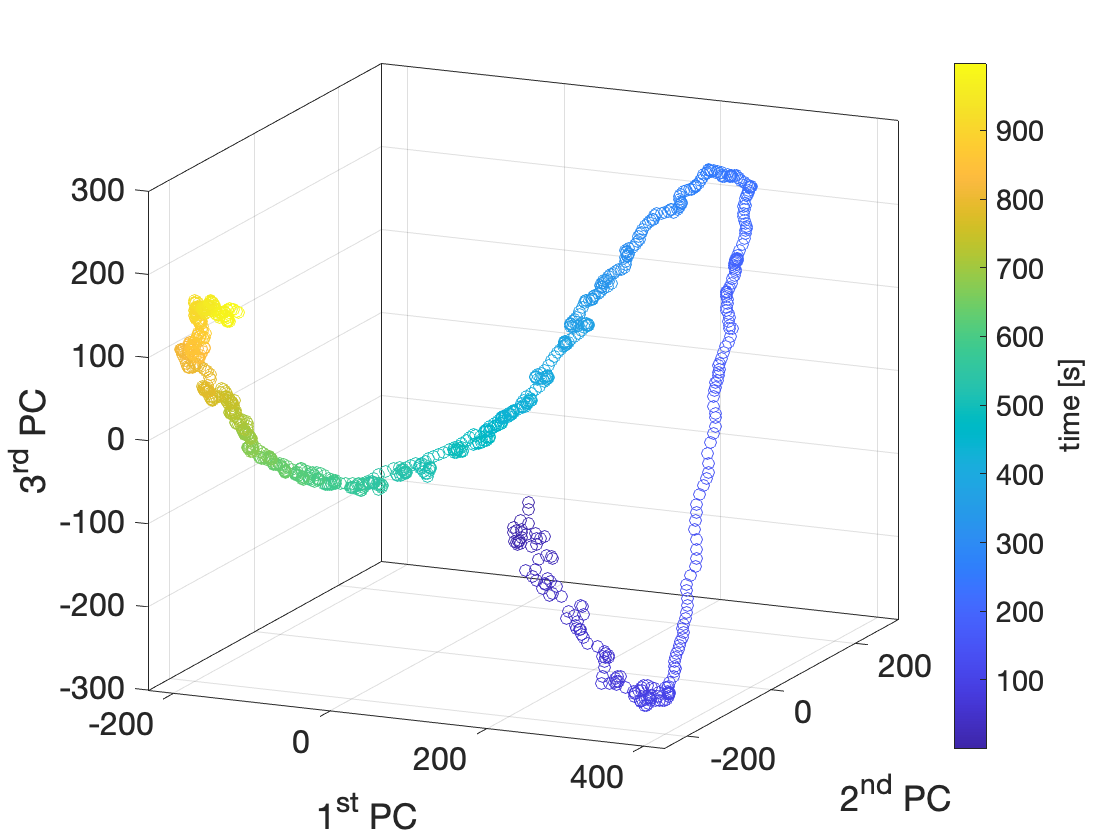}
        \caption{}
    \end{subfigure}
    \caption{Time-evolution of the gradient field of frozen states of the Ginzburg-Landau equation.(a) $L^{1,1}$ and (b) $L^{2,2}$ distance matrices for the evolution of the gradient field of frozen states. Embeddings of the (c) $L^{1,1}$ and (d) $L^{2,2}$ distance metrics via classical multidimensional scaling }
    \label{fig:Lp-frozen}
\end{figure}

\begin{figure}[t]
    \centering
    \begin{subfigure}[t]{0.45\textwidth}
        \includegraphics[width=\textwidth]{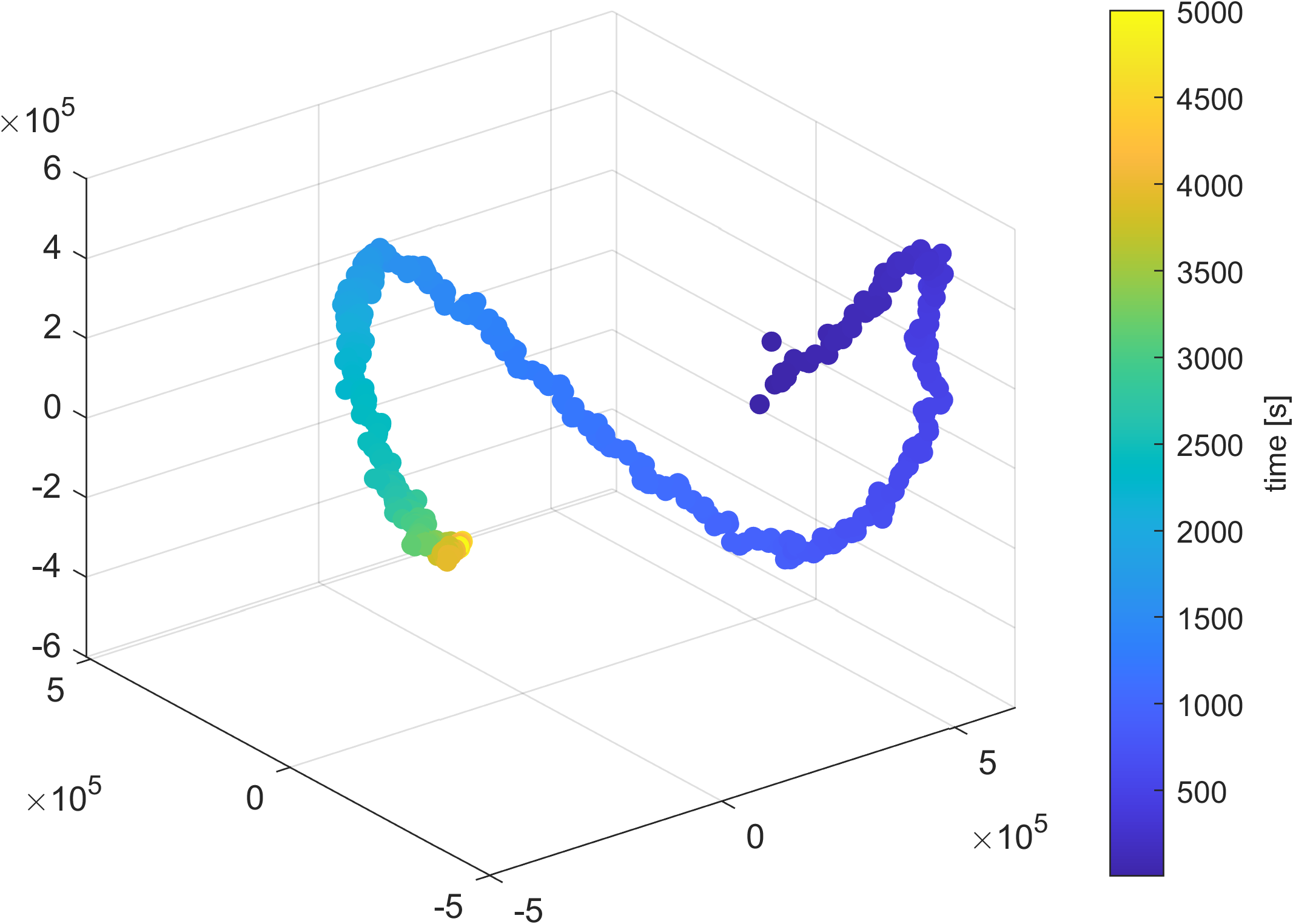}
        \caption{}
    \end{subfigure}
    \begin{subfigure}[t]{0.45\textwidth}
        \includegraphics[width=\textwidth]{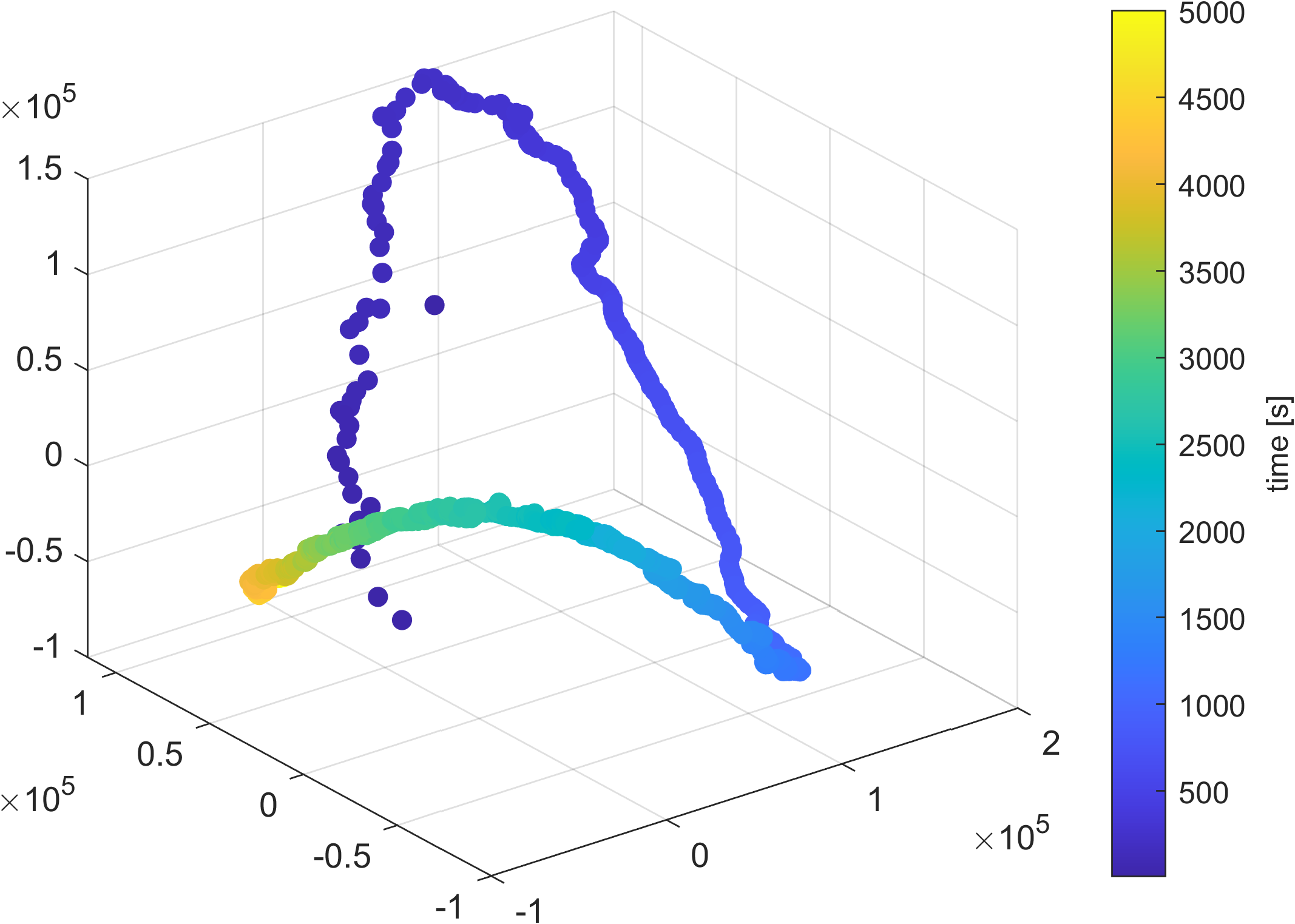}
        \caption{}
    \end{subfigure}
    \\
    \begin{subfigure}[b]{0.45\textwidth}
        \includegraphics[width=\textwidth]{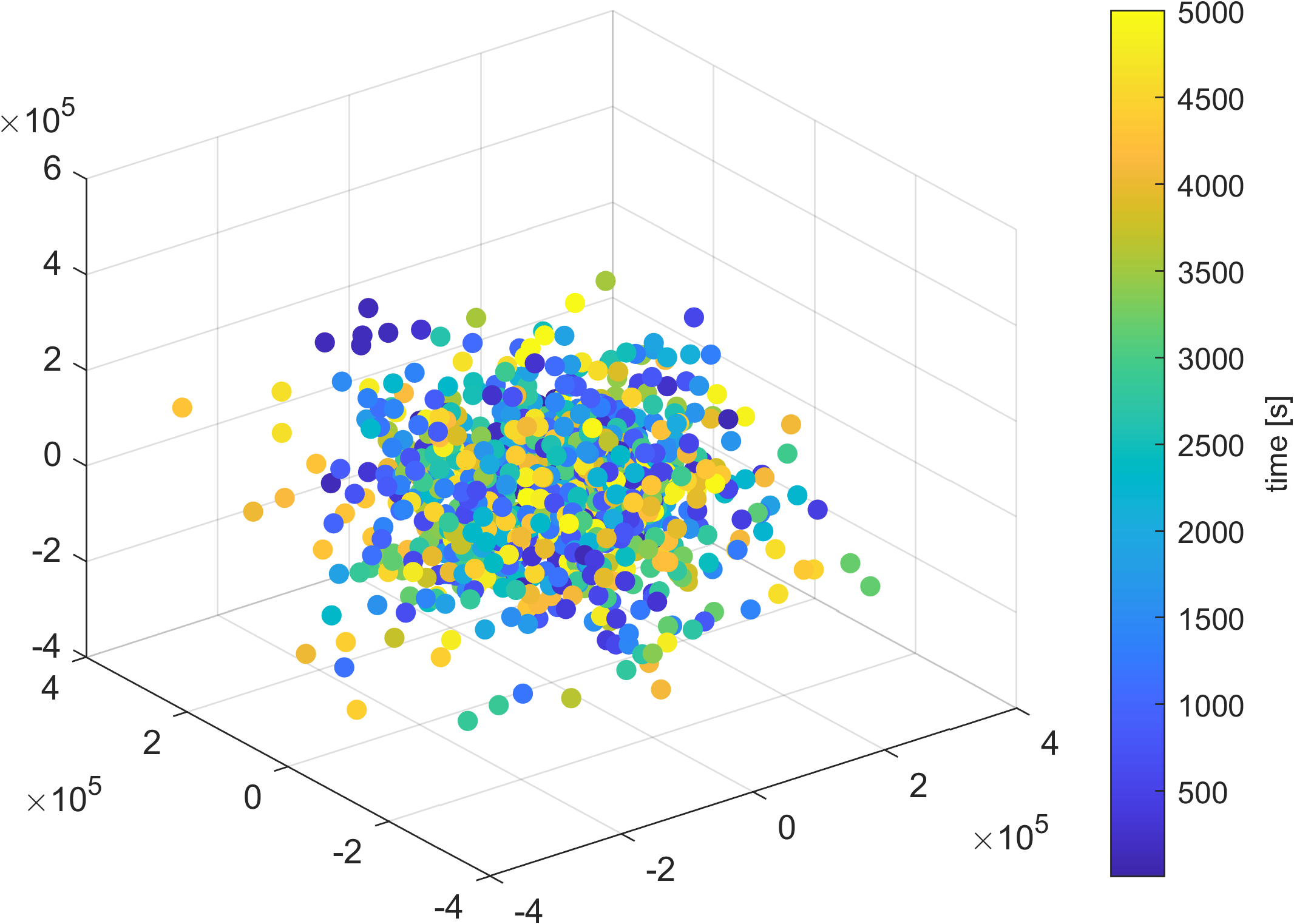}
        \caption{}
    \end{subfigure}
    \begin{subfigure}[b]{0.45\textwidth}
        \includegraphics[width=\textwidth]{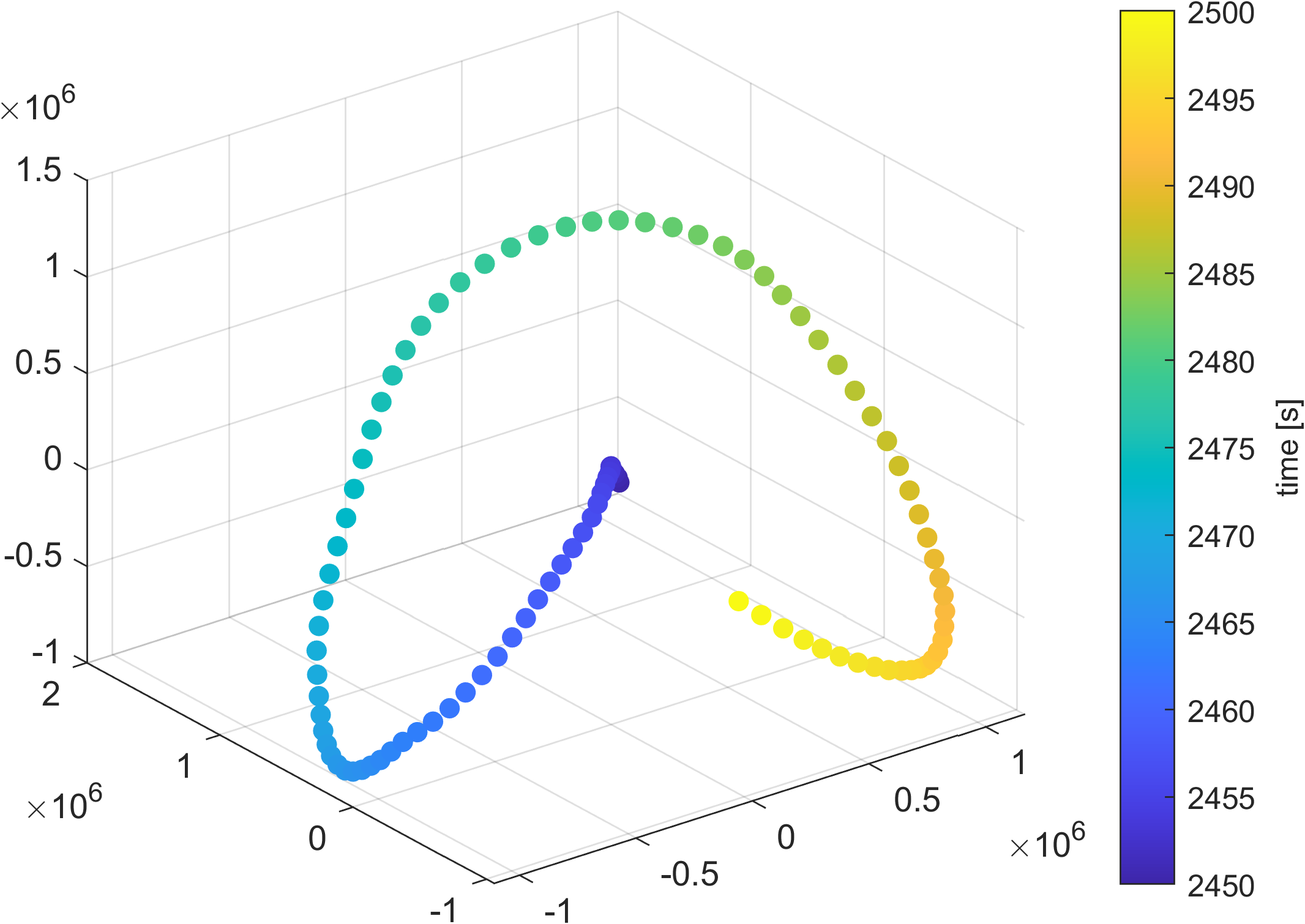}
        \caption{}
    \end{subfigure}
    \caption{Embedding of the gradient vector field $L^{2,2}$distance matrix for different phases: (a) defect turbulence with spirals, (b) frozen states and (c) defect turbulence. The time range is 5000s with 5s increments. (d) shows defect turbulence for a higher time resolution of 0.5s, between 2450 and 2500s.}
    \label{fig:dist_long3}
\end{figure}

 In Figure~\ref{fig:dist_long3}, the low-dimensional MDS embeddings of the $L^{2,2}$-distance matrices of the image gradients of the three attractors are shown for comparison.  
 The data was simulated over a length of 5000 time units, to capture the characteristic dynamics of each state over small and large time scales. 
 
The distance matrix embeddings show some trajectories, where in the case of a stable configuration or stationary point, the distances become small over time. For the chaotic states, the MDS embedding shows no trajectory at small timescales. Only for higher temporal resolution do trajectories become distinguishable. This absence of embedding trajectories for chaotic systems is found for both the scalar and gradient vector fields. At a time-stepping resolution of 5 time unit increments, the defect turbulence with spirals shows a trajectory similar to that of the frozen states, while defect turbulence does not. However, this state does show clear trajectories at 0.5 time unit resolution. Of note is that the spiral defect turbulence state develops frozen domains over long time scales. This likely explains why this state exhibits dynamical order at a lower resolution than the other defect turbulence state. Again, both the scalar and the gradient vector field analysis exhibits these results.

In Figure~\ref{fig:GLmoddec} we summarise the image analysis of the time dynamics of three attractors of the Ginzburg-Landau equation. We have used the $L^{2,2}$-metric on the images associated to the scalar field $A$.  Columns of Figure~\ref{fig:GLmoddec} correspond to three attractors discussed in Section~\ref{GLDG} in order of increasing dynamical complexity, namely frozen states, turbulence with spirals and defect turbulence from left to right. The rows from top to bottom represent a snapshot frame of the dynamics, the distance matrix between the frames, the eigenvalues, the amplitude of the four most significant principal coordinates and the embedding of the dynamics in a three-dimensional space, with the colour coding of the orbit indicating time evolution.   In the case of the frozen states and the defect turbulence with spirals, the dynamics become quiescent after an initial transient as a metastable equilibrium is reached.  A few eigenmodes dominate the eigenvalue spectrum, and the embedded orbit is a relatively simple trajectory that reaches a fixed point for sufficiently long times (yellow colour in figure~\ref{fig:GLmoddec}a4 and figure~\ref{fig:GLmoddec}b4).   The scenario is completely different in the case of defect turbulence (column c in figure~\ref{fig:GLmoddec}).  The eigenvalue spectrum, \ref{fig:GLmoddec}c2, is continuous, the dynamics of the principal components, \ref{fig:GLmoddec}c3, is irregular and the embedded orbit, \ref{fig:GLmoddec}c4, has no structure, indicating that the attractor has a number of dymensions much larger than 3. Interestingly, though, an embedding for a shorter time, inset of figure~\ref{fig:GLmoddec}c4, shows that this turbulent dynamics is composed of a regular orbit and it may be possible to use this to study the local structure of the attractor. %}

\begin{figure}[t]
    \centering
    \includegraphics[width=1\textwidth]{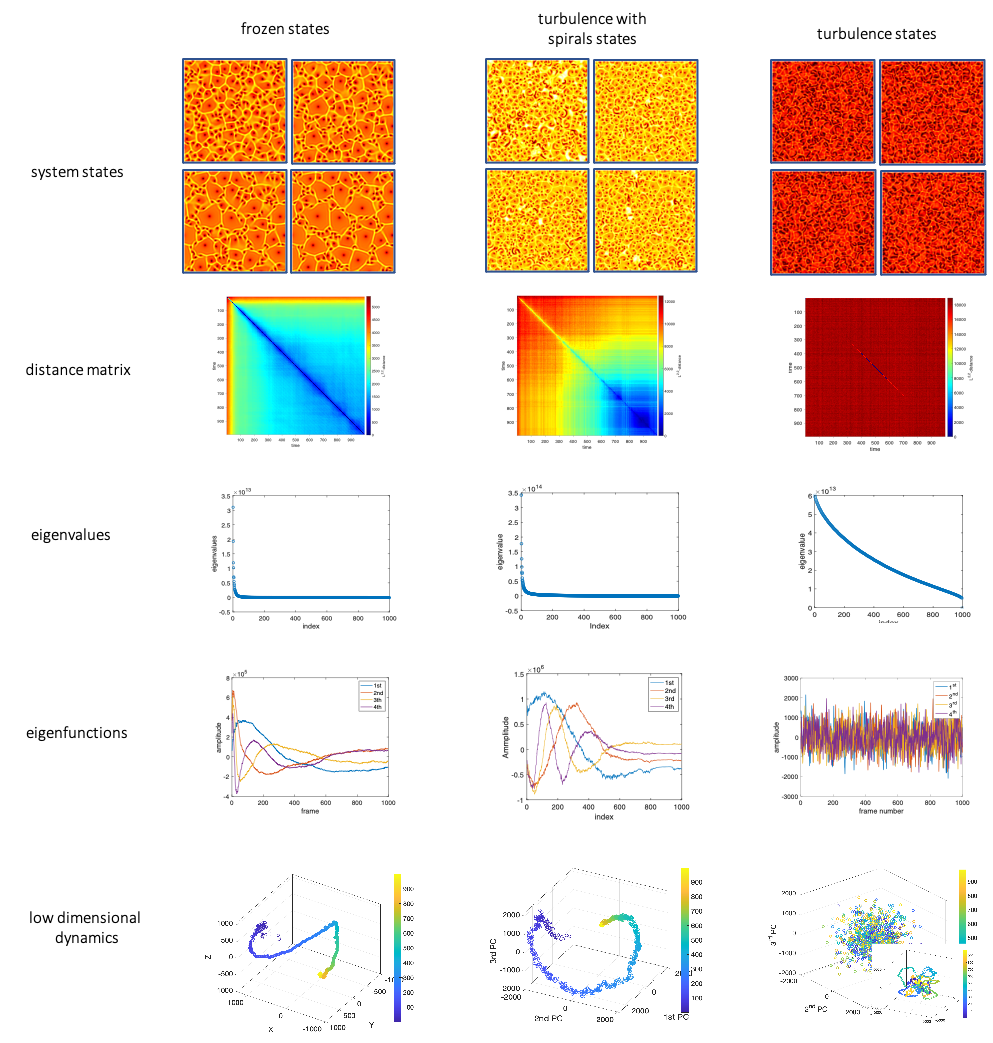}
    \caption{MDS embedding and low-dimensional representation of complex systems dynamics. The left column shows the results for frozen states; the middle column shows the results for the turbulence with spiral states, and the right column shows the results for the chaotic states. The gap between eigenvalues for the frozen and turbulent with spirals states indicates that the dynamics can be approximated in low dimensions; this is in contrast with the behaviour of the totally turbulent system, where dimensionally reduction is not applicable; moreover for this system, for short timesteps, it is possible to observe some trajectories of the system (bottom right corner).}
    \label{fig:GLmoddec}
\end{figure}

\subsubsection{Dimensionality reduction} \label{sec:GLdecomp}
We decompose the image data according to the method outlined in section~\ref{sec:DoToDMS}, with the aim of reducing the large number of dimensions while keeping the main data properties. What we observe is that expressing a system state as a linear combination of its first two principal coordinates recovers some of the main visible features of the system state. This confirms the suitability of using principal eigenvalues and principal coordinates to produce a good approximation of the system dynamics in low dimensions. 

The eigenvalues of the multidimensional scaling of the Ginzburg-Landau solutions 
are shown in the third row of Figure~\ref{fig:GLmoddec}. The results show that only the first three eigenvalues are significant in the case of the frozen states and turbulence states with spirals. This means that it is possible that the dynamics of these systems in low dimensions can be understood using the principal coordinates with the highest eigenvalues. 
Therefore, we approximate the states of the system, given as greyscale images, using only the first three principal components obtained as explained in Section \ref{sec:DoToDMS}. The approximated solutions using only three principal components are shown in Figure~\ref{fig:beta} and retain the main shape of the exact solution, validating the use of this method to generate low-dimensional approximations of the system dynamics. The dynamical behaviour of the turbulence system is different. There is no small subset of eigenvalues that retains most of the variance of the data. This result suggests that the dynamical attractor is high-dimensional. Therefore, it is not possible to find a low-dimensional representation for the system in this case. 

\begin{figure}
    \centering
    \includegraphics[width=0.9\textwidth]{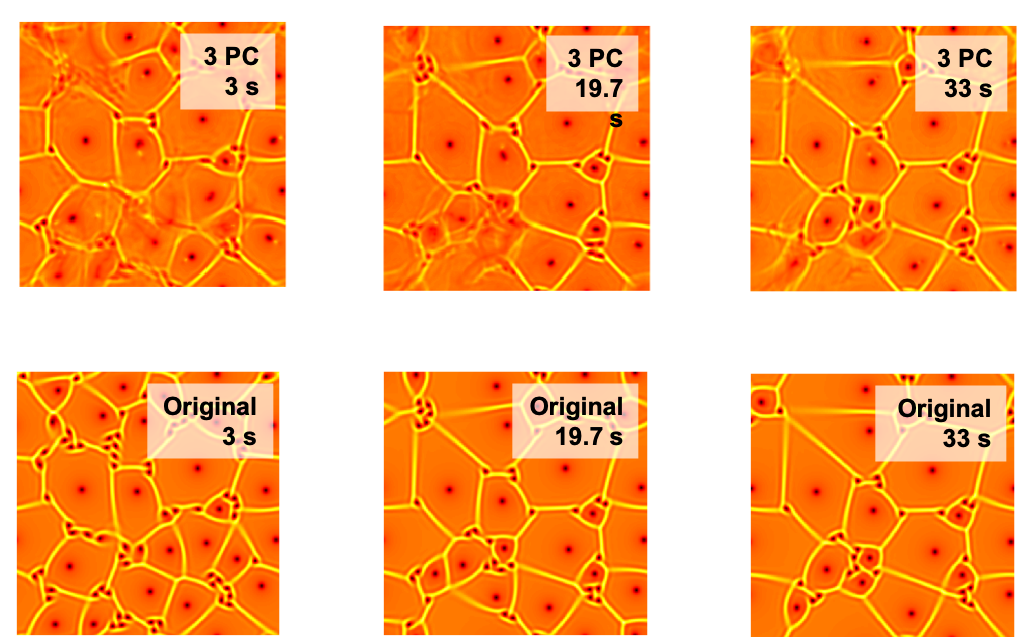}
    \caption{Mode approximation of the frozen states of the Ginzburg-Landau equation. The top row shows the approximation of the frozen states at 3, 19.7 and 33 seconds using 3 principal coordinates. In the bottom row, we show the non-approximated frozen states for comparison.}
    \label{fig:beta}
\end{figure}

\subsection{Turing patterns on a curved domain} \label{Turing}

We consider reaction-diffusion systems (Turing pattern formation) on a triangulated sphere for the second example. To the best of our knowledge, an extension to non-flat surfaces has not been previously attempted for similar analytical methods. We analyse states that form stable Turing patterns as well as states which exhibit spatio-temporal chaos. We aim to show that our method can detect the presence of a chaotic attractor on a curved surface, when this surface is modelled as a discrete measure space.
\begin{comment}
\begin{figure}
    \centering
    \begin{subfigure}[t]{0.4\textwidth}
        \includegraphics[width=\textwidth]{}
        \caption{}
    \end{subfigure}
    %
    \begin{subfigure}[t]{0.4\textwidth}
        \includegraphics[width=\textwidth]{}
        \caption{}
    \end{subfigure}
    %
    \begin{subfigure}[t]{0.4\textwidth}
        \includegraphics[width=\textwidth]{}
        \caption{}
    \end{subfigure}
    \caption{Examples of different  Turing patterns: (a) dots (b) stripes and (c) oscillatory. Only (c) is chaotic over time.}
    \label{fig:turing_states}
\end{figure}
    
\end{comment}

\subsubsection{Data generation}
The data was generated using the Gray-Scott model~\cite{Pearson1993}:
\begin{subequations}
    \begin{align}
        \frac{\partial u}{\partial t} &= D_u\nabla^2u + uv^2 - (A+B)u,\\
       \frac{\partial v}{\partial t} &= D_v\nabla^2v - uv^2 + A(1-v),
    \end{align}
\end{subequations}
where $D_u$ and $D_v$ are diffusion coefficients and $A$ and $B$ are reaction coefficients. For the stable Turing pattern generation, the parameters were set as $D_u=10^5$, $D_v = 2\cdot10^5$, $A=0.04$. The value of $B$ was varied to produce different patterns. The turbulent solution with oscillations was generated with the previously mentioned parameter values, except now $A=0.01$ and $B=0.033$ (see~\cite{Pearson1993} for an overview of the Gray-Scott solutions and appropriate parameter choices). The equations were solved in Comsol Multiphysics 5.6 on a spherical surface with radius $2.5/2\pi$, using the general form boundary PDE interface. A discrete domain was created by producing a triangular mesh on the sphere, consisting of 6670 triangular elements. The time range over which the model was solved was $2\cdot10^4$ time units %\tgd{[time units]} 
with 20 time unit increments between outputs. The initial condition was taken to be $u=0$ and $v=1$, except in a region of 90 by 90 degrees, where initially $u=0.25$ and $v=0.5$. A random factor between 0 and 1 was added to these initial conditions to generate noise. The Comsol model output %\tgd{results(?)}
was imported in Matlab, where the area of all mesh elements was computed. The value of the field $u$ and its gradient at each element was taken to be the average of the values at the nodes of the element. A vector field created from the scalar field by taking the image gradient.  %(See SI Supplementary video 2).

\subsubsection{Time dynamics analysis and phase space reconstruction}
With the solution obtained at each mesh element, the $L^{2,2}$-distance was calculated using equation~\eqref{Lp_norm_discrete}, and the distance between fields at different times is given by equation~\eqref{Lp_dist_discrete}. 
We present the results of the geometric pipeline applied to the three Turing systems in Figure~\ref{fig:turing_all}. The eigenvalues obtained from MDS embedding show that the dynamics of all three systems can be approximated in low dimension using the first three principal coordinates. The eigenvalue analysis allows us to reduce the dynamics to three dimensions, since most of the data's variance is captured along the directions spanned by the first three principal coordinates. 
The 3-dimensional embeddings of the systems dynamics show, qualitatively, the different stages of the system dynamics for the Turing pattern states: 1) the diffusion of the noise in the initial condition; one long red stripe is formed against a homogeneous background; 2) the transformation of this stripe into the Turing pattern; 3) a steady state of the patterns with slight movement, which results in a mostly straight line in the embedding plots. The analysis of the eigenfunctions show differences in the behaviour of the three systems. The first thing to notice is that the eigenfunctions of the stripes and dots patterns seem to converge at large times. In contrast, the eigenfunctions show an oscillatory behaviour for the chaotic state. For the formation of the dot patterns, the system first has to generate stripes, which later break into pieces, leading to the formation of dots. Even though the eigenfunctions converge for the dots and stripes formation, their shape is more intricate for the case of the dot patterns. All these observations show that the eigenfunctions also capture the complexity of the dynamical systems. 

The embedding of the distance matrices of the solutions allows us to detect dynamical orbits for the chaotic system, as shown at the bottom right. Large timescale embeddings show the system's turbulence, while some trace of the oscillations is still visible; there is a faint pattern in the color distribution. 
These examples demonstrate the applicability of our method to qualitatively characterise the global dynamics of a system on non-flat domains. 

\subsubsection{Dimensionality reduction}

We apply the vector field decomposition method to approximate the solutions of the Gray-Scott system data using the first principal coordinates.
In Figure~\ref{fig:turing_approx} we present the approximation of the system states of the striped Turing pattern using only the first three principal components. We can see that this approximation preserves the system's principal features while reducing the data's dimensions. These results confirm that the suitability of the method to provide a low-dimensional approximation of complex system dynamics using non-flat domains. 

\begin{figure}[t]
    \centering
    \includegraphics[width=1\textwidth]{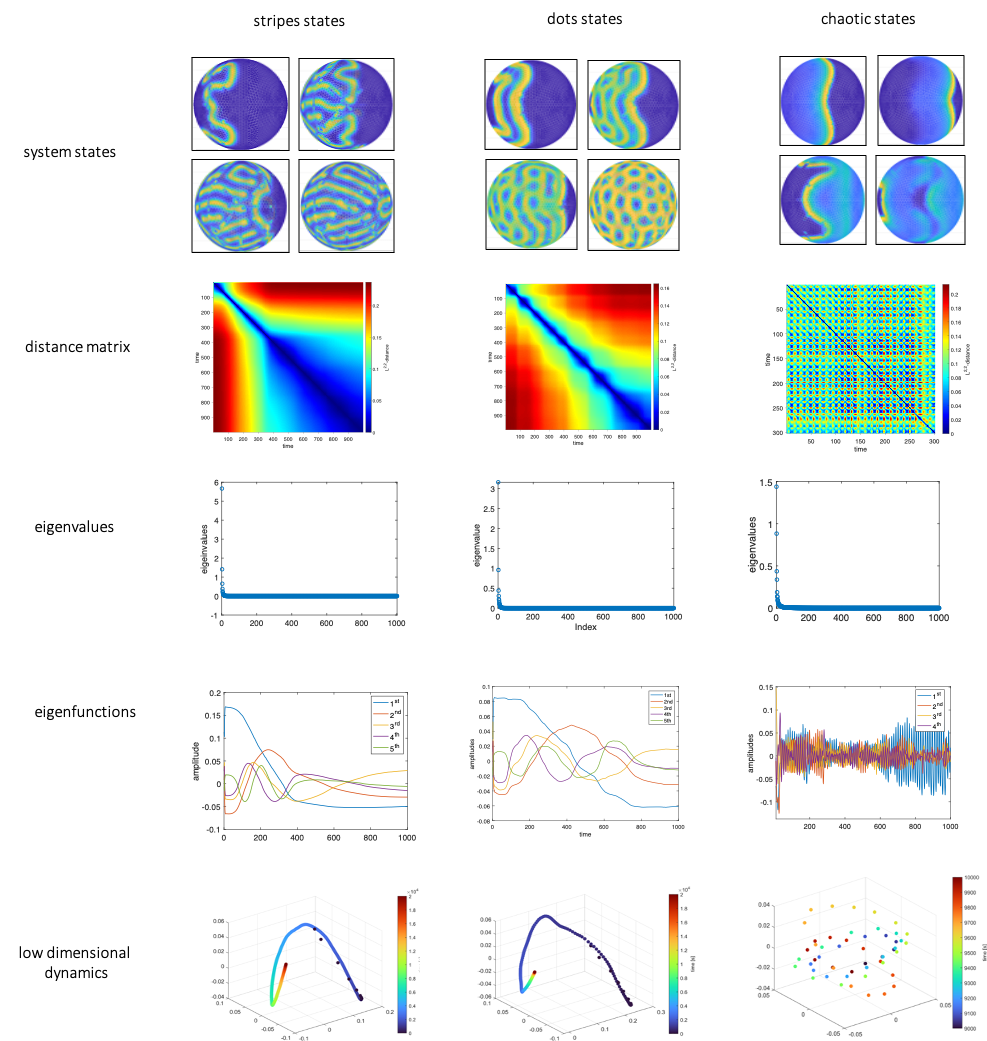}
    \caption{Multidimensional scaling and dimensionality reduction of complex systems dynamics.The modelled solutions to the Gray-Scott model (upper row) for the following parameter choices, from left to right: $A=0.04$, $B=0.0584$ and $A=0.04$, $B=0.062$ and $A=0.01$, $B=0.033$. The lower rows show the corresponding distance matrix embeddings for the scalar field $u$ (middle row) and the vector field $\boldsymbol{\nabla}u$ (bottom row). For the turbulent solution, shorter timescales are used in the embedding plots to show the circular structure.}
    \label{fig:turing_all}
\end{figure}

\begin{figure}[t]
    \centering
    \includegraphics[width=1\textwidth]{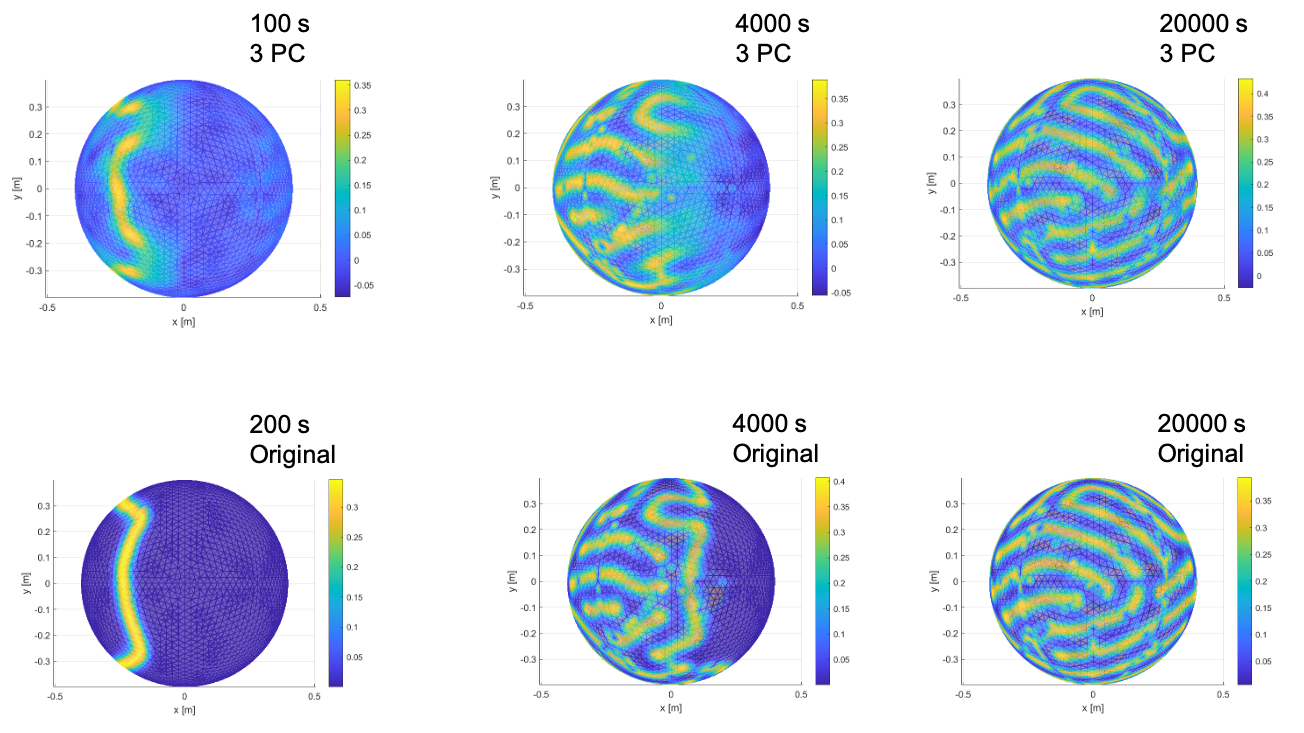}
    \caption{The first three principal components of the striped Turing pattern (above) compared to the original images (below), at 200 s, 4000 s, and 20000 s, from left to right.}
\label{fig:turing_approx}
\end{figure}

\subsection{Dynamical attractors detection} \label{sec:Lya}

Chaotic systems are characterised by their sensitive dependence on initial conditions, that is, the separation between two initially  close trajectories in the state space will grow exponentially fast in time \cite{kantz2004}. 
The framework of vectors over discrete measure spaces allows us to use data of time-evolving systems to visually detect the presence of dynamic attractors, and to determine the nature of these, that is, whether these attractors are chaotic or not. 

Lyapunov exponents have been widely used to detect the presence of attractors in time-dependet data, where each data point is as a vector of $\mathbb R^d$. We extend the use of Lyapunov exponents to analyse data that has the structure of a vector over a discrete measure space. In this case, the vector representation of the data obtained via the multidimensional embedding allows us to make use of this approach. We have computed the Lyapunov exponent in two ways: 1) through the analysis of the first principal component of the multidimensional embedding of the $L^1$-distance matrix of the images and 2) through the $L^2$-norm of the images. The estimated largest Lyapunov exponents for the two methods are 0.38 and 0.15, respectively. We compute the maximal Lyapunov exponent using the built-in function 'lyapunovExponent' in Matlab2022b. The results confirm the results of obtained via the geometric methodology: the turbulent solution with oscillations is the only one with a positive maximal Lyapunov exponent.

\section{Conclusions} \label{sec:conclusions}

The ubiquity of complex systems across different fields of science makes the development of analytic methodologies of great relevance. However, the analysis of high-dimensional spatio-temporal data obtained from complex systems dynamics presents various challenges. Spaces of vector fields of rank over discrete measure spaces provide a robust geometric framework that can be efficiently used as an input in unsupervised learning algorithms to analyse a variety of unstructured data types, including images, gradients of images, functions on meshes and vector-valued functions over simplicial complexes, for pattern recognition. We demonstrated the validity of this methodology to analyse high-dimensional data originating from the study of complex systems. As a result of the dependencies and connections of the entities that form a given complex system, the proposed geometric framework provides a robust low-dimensional model of the system dynamics. We have also shown that our geometric approach provides a visual alternative to the Takens-based methods to detect global dynamic attractors. Two advantages of the proposed methodology are highlighted: first, no prior knowledge of the dynamical system is required; second, it can be used on high-dimensional unstructured data sets, including RGB images with high resolution. Even though our geometric framework, leveraging the theory of vector fields over discrete measure spaces, has been motivated by its applications to soft matter research, its level of generality can benefit a large community of experts involved in the analysis of complex spatio-temporal datasets arising from dynamical systems.

\section*{Data availability}
The data sets generated and analysed during the current study are available from the corresponding author on reasonable request.

\section*{Code availability}
The code used during the current study is available from the corresponding author on reasonable request.

\section*{Acknowledgements}
This work was supported by the Leverhulme Trust (grant RPG-2019-055).

\section*{Authors' contributions}
IMS developed the geometric methodology for data analysis. MvR implemented the numerical solutions of the Ginzburg-Landau equation and the Grey-Scott equation. Both IMS and MvR carried out the data analysis. IMS and MVR wrote the paper with input from all the authors. MK, JB, and GD planned the core research.

\section*{Competing interests}
The authors declare that they have no competing interests.

\bibliography{bibliography}
\bibliographystyle{plain}

\end{document}